\newcommand{\algo}{{\cal A}}
\newcommand{\ouralg}{\bar{{\cal A}}}
\newcommand{\be}{\begin{equation}}
\newcommand{\ee}{\end{equation}}
\newcommand{\comment}[1]{}
\newcommand{\secref}[1]{Section \ref{#1}}
\newcommand{\ignore}[1]{}
\def\sse{\subseteq}
\def\sm{\setminus}
\def\F2{\mathcal{F}^{(2)}}
\def\C{\mathcal{C}}
\def\F{\mathcal{F}}
\def\G{\mathcal{G}}
\def\q{\mathbf{q}}
\def\Y{Y}
\def\y{y}
\def\X{X}
\def\Yhat{\widehat{Y}}
\def\q{q}
\newcommand{\prob}[2][]{\text{\bf Pr}\ifthenelse{\not\equal{}{#1}}{_{#1}}{}\!\left[#2\right]}
\newcommand{\expect}[2][]{\text{\bf E}\ifthenelse{\not\equal{}{#1}}{_{#1}}{}\!\left[#2\right]}
\newcommand{\val}{v}
\def\v2{\val^{(2)}}
\newtheorem{theorem}	 			{Theorem}[section]
\newtheorem{lemma}		[theorem]	{Lemma}
\newtheorem{definition}
[theorem]	{Definition}}
\newtheorem{remark}		[theorem]
{Remark}}
\newenvironment{proof}{\noindent {\em {Proof:}}}{$\blacksquare$\vskip \belowdisplayskip}
\title{Tight Error Bounds for Structured Prediction}
\author{Amir Globerson
\and
Tim Roughgarden
\and
David Sontag
\and
Cafer Yildirim}
\begin{document}

\maketitle
\thispagestyle{empty}

\begin{abstract}
Structured prediction tasks in machine learning involve the
simultaneous prediction of multiple labels. This is typically done by
maximizing a score function on the space of labels, which decomposes
as a sum of pairwise elements, each depending on two specific labels.
Intuitively, the more pairwise terms are used, the better the expected
accuracy. 
However, there is currently no theoretical account of this intuition.
This paper takes a significant step in this direction. 

We formulate the problem as classifying the vertices of a
known graph $G=(V,E)$, where the vertices and edges of the graph are
labelled and correlate semi-randomly with the ground truth.
We show that the prospects for achieving low expected Hamming error
depend on the structure of the graph $G$ in interesting ways.  
For example, if $G$ is a very poor expander, like a path, then
large expected Hamming error is inevitable.
Our main positive result shows that, 
for a wide class of graphs including 2D grid graphs common in machine
vision applications, there is a polynomial-time algorithm with small
and information-theoretically near-optimal expected error.
Our results provide a first step toward a theoretical justification   
for the empirical success of the efficient approximate inference
algorithms that are used for structured prediction in models 
where exact inference is intractable.

\end{abstract}

\comment{
ORIGINAL ABSTRACT
We study the problem of computing a binary classification of a set of
objects from pairwise information. 
The goal is to compute a classification that has low error with respect to a
``ground truth'' classification, where error is the number of
misclassified objects.
The input is a ``+/-'' labelling of the edges of a known graph $G=(V,E)$,
where the edge labels correlate semi-randomly with the ground truth
(``+'' suggesting that a pair of objects should receive the same label, ``-''
for different labels).
Precisely, for a noise parameter $p \in (0,\tfrac{1}{2})$, each edge
$e \in E$ is independently deemed ``bad'' with probability $p$; good
edges are labelled consistently with the ground truth, while bad edges
are labeled arbitrarily.  We say that a graph $G$ admits {\em
  approximate recovery} if there is an algorithm with asymptotic
expected error at most $f(p) \cdot N$ for every possible ground truth
classification, where $N = |V|$ is sufficiently large and $f$ is a
function of $p$ only that goes to 0 with $p$.

We show that the prospects for approximate recovery depend on the
structure the graph $G$ in interesting ways.  If $G$ is a very poor
expander, like a path, then approximate recovery is
information-theoretically impossible.  If $G$ is an expander, has a
large minimum cut, or is among a large class of planar graphs,
then approximate recovery is information-theoretically possible.
Our algorithm for planar graphs is also computationally efficient.
For the special case of grid graphs, which are important in applications,
we prove matching upper and lower bounds of $\Theta(p^2N)$ on the
information-theoretically optimal error.
}

\newpage

\section{Introduction}\label{sec:intro}

An increasing number of problems in machine learning are
being solved using structured prediction \cite{Collins02, Lafferty01conditional,
  Taskar03}. Examples of structured prediction include 
dependency parsing for natural language processing, 
part-of-speech tagging 
named entity recognition, 
and protein folding. 
In this setting, the input $X$ is some observation (e.g., an image, a sentence) and the
output is a set of labels 
(e.g., whether
each pixel in the image is foreground or background, or the parse tree
for the sentence). The advantage of performing structured prediction
is that one can specify features that encourage sets of labels to take
some value (e.g., a feature that encourages two neighboring pixels to
take different foreground/background states whenever there is a big
difference in their colors). The feature vector can then be used
within an exponential family distribution over the space of labels,
conditioned on the input. The parameters are learned using maximum
likelihood estimation (as with conditional random fields \cite{Lafferty01conditional}) or using
structured SVMs \cite{Altun03,Taskar03}.
 
In the applications above, performance is typically quantified as the discrepancy between the correct ``ground truth'' labels $\Y$
and the predicted labels $\Yhat$. The most common performance measure,
which we study in this paper, is Hamming error, the number of
disagreements between $\Y$ and 
$\Yhat$. The optimal decision strategy for minimizing Hamming error is
to use marginal inference, namely $\Yhat_i \leftarrow \arg\max_{\Y_i}
p(\Y_i \mid X)$ for each $i$, where $p$ is the true generating
distribution. However, in practice MAP inference is more often used. 
Namely, the assignment maximizing $p(\Y|\X)$ is returned. One
advantage of using MAP inference is computational,
as the partition function (normalization constant) no longer needs to
be estimated during training or at test time. However, in the
worst case, even MAP inference can be NP-hard, such as for binary
pairwise Markov random fields with arbitrary potential functions.

\comment{
It is common wisdom that when quantifying error using Hamming
error, it is best to predict using marginal inference,
i.e. $\Yhat_i \leftarrow \arg\max_{\Y_i} p(\Y_i \mid X)$ using the
distribution that was learned.
Despite this, often prediction is performed using 
  MAP inference,  $\Yhat \leftarrow \arg\max_{\Y} p(\Y \mid X)$. One
advantage of using MAP inference is computational,
as the partition function (normalization constant) no longer needs to
be estimated during training or at test time. However, in the
worst-case, even MAP inference can be NP-hard, such as for binary
pairwise Markov random fields with arbitrary potential functions,
for which MAP inference can be reduced to MaxCut.
}

It is now widely understood from a practical perspective that better 
performance (measured in terms of Hamming error) can be obtained by 
using a more complex model incorporating a strong set of features than 
a simple model for which exact inference can be performed.
Despite the worst-case intractability of inference in these models, 
heuristic MAP inference algorithms often work well in practice,
 including those based on linear programming relaxations and dual
 decomposition \cite{koo2010dual,SontagEtAl_uai08}, policy-based
 search \cite{daume-searn09}, graph cuts \cite{Kolmogorov_graphcuts},
 and branch-and-bound \cite{Sun_BB_CVPR2012}. By ``work well in
 practice'', we mean that they obtain high accuracy
 predicting the true labels on test data, measured in terms of the
 actual loss function of interest such as Hamming error.

However, the theoretical understanding of the setup is fairly 
limited. For example, for many applications even the state-of-the-art
structured prediction models are unable to achieve zero
labeling error, and there is no characterization of the choice of 
feature sets and the generative settings for which high prediction 
accuracy can be expected, even ignoring computational limitations.
Moreover, the good performance of these heuristic algorithms indicates
that real-world instances are far from the theoretical
worst case, and it is a major open problem to better characterize the
complexity of inference problems to distinguish those that are in fact
easy to solve from those that are computationally
intractable. Finally, it is not well understood why MAP inference can
provide such good results for these structured prediction problems
and how much accuracy is lost relative to marginal inference.

%
%

The goal of this paper is to initiate the theoretical study of
structured prediction for obtaining
small Hamming error. Such an analysis must define a generative process
for the $\X,\Y$ pairs, in order to properly define expected Hamming
error. Our model assumes that the observed $\X$ is a noisy version of
$\Y$ in the following sense: $X_i$ is a noisy version of the true
$Y_i$ and $X_{i,j}$ is a noisy version of the variable ${\cal
  I}\left[Y_i=Y_j\right]$. The resulting posterior for $\Y$ given $\X$
is then very similar to the {\em data} and {\em smoothness} terms used
for structured prediction in machine vision. Motivated by machine
vision applications we also focus on the case where $i,j$ pairs
correspond to a two dimensional grid graph \cite{vicente2008graph}. We
also provide results for classes of non-grid and non-planar graphs.

As noted earlier, prediction is often performed by taking marginals of
the posterior or its maximum. Both of these turn out to be
computationally intractable in our setting. We are thus also
interested in analyzing algorithms that {\em are} polynomial time and
have guarantees on the expected Hamming error. Our main result is that
there exists a polynomial-time algorithm that achieves the
information-theoretic lower bound on the expected Hamming error, and
is thus 
optimal (up to multiplicative constants). The algorithm is a two-step
procedure which ignores the node evidence in the first step, solving a
MaxCut problem on a grid (which can be done in polynomial time), and
in the second step uses node observations to break symmetry. 
We use combinatorial arguments to
provide a worst-case upper bound on the error of this algorithm.
Our analysis
is validated via experimental results on 2D grid graphs.  

\comment{
 We consider a model of structured prediction in which
the goal is to recover a set of binary labels from noisy observations
about individual and pairs of variables. 
We illustrate the model in Figure~\ref{fig:main_figure}(a--c). 
The three
parameters of the model are the graph $G$, the node noise $\q$, and the
edge noise $p$. 
The idea is that each set of features used for
structured prediction is included because of its utility in
distinguishing the true labels from wrong labels (learning
would result in weights that ignore the useless features). However, the
features are by themselves noisy, which leads to the structured
prediction formulation.


When the edge noise $p=0$, knowing the true label of one variable
would allow us to predict all other labels, and nearly perfect recovery
is possible so long as $\q < .5$ and the graph is sufficiently
large. When $p=.5$, the edges provide no useful information, and so
the best one can do is to predict for each label whatever the node
observation is, giving an average error rate of $\q$. The interesting
setting for structured prediction is when the node noise $\q$ is very
large and $p$ is small, as in this setting there is no correlation
decay and even node observations far from a given label are important
to correctly predict it.
We emphasize that when $p>0$, the corresponding potential
functions will with high probability not be submodular, with no
flipping of the nodes that would make it submodular. However, if one
quantifies the distance from submodular in terms of the number of
edges whose observations would need to be flipped to make the model
submodular, small values of $p$ correspond to being close to
submodular; this too is an interesting model for real-world applications.

Our main results are for two
dimensional grid graphs \cite{vicente2008graph} that are very common
in computer vision (e.g., foreground-background
segmentation) and thus our results are directly applicable
to them. 
We emphasize that MAP inference in grid graphs with both node and edge
potentials is known to be NP-hard in the worst case. 
Moreover, approximation algorithms such as Goemans-Williamson,
which provides a multiplicative-factor approximation {\em to the
  objective value} in settings similar to ours, are not known to
provide an approximation guarantee for Hamming error.
Nonetheless, as
we illustrate in Figure~\ref{fig:main_figure}(d), approximate MAP inference
algorithms {\em are} able to do a good job at approximate recovery,
providing further motivation that our proposed model is interesting
and that a positive result should be possible.
}

\comment{
Focusing in particular on grid graphs, we prove information-theoretic lower bounds on the ability of {\em
  any} algorithm, independent of computation time, to perform
approximate recovery. We also give a polynomial-time algorithm that, up to a small
constant, is {\em optimal}, obtaining the minimum-possible expected Hamming error in
such graphs. 
Section~\ref{s:ext} outlines extensions of the results to other planar
graphs and to expander graphs, the latter of which has applications to
relational classification (e.g., predicting protein-protein
interactions or web-page classification). 

In particular, we show that approximate recovery is possible for any expander graph, even those 
with constant degree.
Finally, we perform an empirical study on grid graphs,
demonstrating both that approximate recovery on grid-structured graphs
is highly non-trivial (e.g., inference using iterated
conditional modes or dual decomposition gives very poor results) and
that the polynomial-time algorithm that we proposed and analyzed
obtains nearly optimal accuracy, with a constant close to 1.
}

\section{Related Work \label{ss:related}}
Our goal is to recover a set of unobserved variables $\Y$ from a set of noisy observations $\X$.
 As such it is related to various statistical recovery settings, but distinct from those in several important aspects. Below we review some of the related problems.
 
{\bf Channel Coding:} This is a classic recovery problem
(e.g., see \cite{arora2009message}) where the goal is to exactly
recover $\Y$ (i.e., with zero error). Here $\Y$ is augmented with a
set of ``error-correcting'' bits, deterministic functions of
$\Y$, and the complete set of bits is sent through a noisy
channel. In our model, $\X_{i,j}$ is a noisy version of the parity of
$Y_i$ and $Y_j$. Thus
our setting may be viewed as communication with an error correcting
code where each error-correcting bit involves two bits of the original
message $\Y$, and each $Y_i$ appears in $d_i$ check bits, where $d_i$
is the number of edge observations involving $Y_i$. Such codes cannot be used for errorless transmission (e.g., see
our lower bound in Section \ref{s:grid}).
As a result, the techniques and results from channel coding do not appear
to apply to our setting. 

\ignore{
\paragraph{Recovering Ground Truth Clusterings:}
Several previous works have studied clustering and cut problems with
respect to some notion of ground truth.  To compare our approach to
earlier ones, we highlight four aspects of our model.
\begin{enumerate}

\item We measure the performance of an algorithm with respect to a
  worst-case (prior-free) ground truth and a distribution over inputs
  derived from the ground truth via a (semi-)random process.

\item We measure the performance of an algorithm via its expected
  error, not the probability of exact recovery of the ground truth.  
In our model, the latter quantity is typically almost zero for every
  algorithm unless the
  noise parameter $p$ is extremely small.

\item We measure an algorithm's error by the number of objects that it
  misclassifies, not by the number of pairwise relationships that are
  inconsistent with the ground truth.  There is no obvious way to
  translate error bounds on one of these objectives to the other.

\item We allow the graph $G$ of observed pairwise relationships to be
  arbitrary, rather than assuming it is the complete graph.

\end{enumerate}
}

\ignore{
Work on approximation stability, initiated by Balcan et
al.~\cite{BBG13} and Bilu and Linial~\cite{BiluL12}, share much of
our motivation and technical approach and differ principally on the
first point above.  Instead 
of assuming that the input is derived from the ground truth by a
random process, these papers make an incomparable assumption that all
near-optimal solutions with respect to some objective function have low error
with respect to the ground truth clustering.  The emphasis of these papers is
on computing a low-error clustering in polynomial time.  Approximation
stable instances of correlation clustering problems were studied by
Balcan and Braverman~\cite{BB09}, who gave positive results when $G$
is the complete graph and stated the problem of understanding general
graphs as an open question.
}

\ignore{
Three previous papers ---
Bansal et al. \cite{bansal2004correlation},
Joachims and Hopcroft~\cite{JH05},
and 
Mathieu and Schudy \cite{mathieu2010correlation} ---
studied correlation clustering inputs that are
generated by noise applied to the edges of a ground truth clustering.  
All three papers consider only complete or (in~\cite{JH05}) very dense
random graphs, but allow an arbitrary number of clusters.
All three papers design algorithms to minimize the number of edge errors;
 no bounds on the number of classification errors are
offered, except in special cases where exact recovery is
possible~\cite{mathieu2010correlation}. 
The first and third papers also insist on polynomial-time recovery
algorithms.
The guarantees in Mathieu and Schudy \cite{mathieu2010correlation}
have the additional feature of being robust to semi-random
adversaries, like the results in the present work.
}

{\bf Correlation Clustering (CC):} There are numerous variants of this problem, but in the typical setting
$\Y$ is a partition of $N$ variables into an unknown number of
clusters and $\X_{u,v}$ specifies whether $Y_u$ and $Y_v$ are in the same cluster (with some probability of error as in \cite{JH05} or adversarially as in \cite{mathieu2010correlation}).
The goal is to find $\Y$ from $\X$. 
Most CC
works assume an unrestricted number of 
clusters~\cite{bansal2004correlation,JH05}, although a few
consider a fixed number of clusters~(e.g. see \cite{GG06}). 
Our results apply to the case of two clusters.
 \ignore{
The worst-case polynomial-time
approximability of correlation clustering has been extensively studied
for the objective functions of maximizing the number of edge
agreements  and minimizing the number of edge disagreements.  Most
works study complete graphs and an unrestricted number of
clusters~\cite{ACN08,bansal2004correlation,CSW08}, though there are a few 
papers on a fixed number of clusters~\cite{CSW08,GG06} and general
graphs~\cite{CGW05,D+06,S04}.  
}
The most significant difference is that most of the CC works study the objective of minimizing the
number of edge disagreements. It is not obvious how to translate
the guarantees provided in these works to a non-trivial bound on
Hamming error (i.e., number of {\em node} disagreements) for our
analysis framework. 
\ignore{ Indeed,
our main technical results can be interpreted as proofs that, at least
in many interesting cases, optimally solving the 
correlation clustering problem yields classification error close to
the information-theoretic minimum.  Understanding how generally this
statement holds is an open question.  Obviously, the situation is even
murkier when the given correlation clustering problem is only solved
approximately.
}

{\bf Approximately Stable Clusterings:}
Work on approximation stability, initiated by Balcan et
al.~\cite{BBG13} and Bilu and Linial~\cite{BiluL12}, also seek
polynomial-time algorithms with low Hamming error with respect to a
ground truth clustering.  Instead 
of assuming that the input is derived from the ground truth by a
random process, these papers make an incomparable assumption that all
near-optimal solutions w.r.t.\ some objective function have low error
w.r.t.\ the ground truth clustering.  
Approximation
stable instances of correlation clustering problems were studied by
Balcan and Braverman~\cite{BB09}, who gave positive results when $G$
is the complete graph and stated the problem of understanding general
graphs as an open question.

{\bf Recovery Algorithms in Other Settings:}  The high-level
goal of recovering ground truth from a noisy input has been studied 
in numerous other application domains.  
In the overwhelming majority
of these settings, the focus is on maximizing the probability of
exactly recovering the ground truth, a manifestly impossible goal in
our setting.
This is the case with, for example, 
planted cliques and graph partitions
(e.g.~\cite{condonkarp,feigekillian,mcsherry2001}),
detecting hidden communities \cite{anandkumar2013tensor}, and 
phylogenetic tree reconstruction \cite{daskalakis2006optimal}. 
A notable exception is work by Braverman and Mossel~\cite{BM08} on
sorting from noisy information, who give polynomial-time algorithms
for the approximate 
recovery of a ground truth total ordering given noisy pairwise comparisons.  Their approach,
similar to the present work, is to compute the maximum
likelihood ordering given the data, and prove that the expected
distance between this ordering and the ground truth ordering is
small.

{\bf Recovery on Random Graphs:} Two very recent works
\cite{AbbeBBS14,ChenG14} have addressed the case where noisy pairwise
observations of $\Y$ are obtained for edges in a graph. In both of
these, the focus is mainly on guarantees for random graphs (e.g.,
Erd\"os-Renyi graphs). Furthermore, the analysis is of perfect recovery (in the
limit as $n\to\infty$) and its relation to the graph ensemble. The goal
of our analysis is considerably more challenging, as we are interested
in the Hamming error for finite $N$. Abbe et al. \cite{AbbeBBS14}
explicitly state partial (as opposed to exact) recovery for sparse
graphs with constant degrees as an open problem, which we solve in
this paper.

{\bf Percolation:} Some of the technical ideas in our study of
grid graphs (Section~\ref{s:grid}) are inspired by arguments in
percolation, the study of connected clusters in random (often
infinite) graphs. 
For example, our use of ``filled-in regions'' in Section~\ref{s:grid} is 
reminiscent of arguments in percolation theory (e.g., see p.~286 in \cite{grimmett99}).
In addition, we can directly adapt results from statistical physics that
bound the connectivity constant
of square lattices \cite{clisby12, madras93} and the number of
self-avoiding polygons of a particular length and area \cite{jensen00},
to give precise constants for our theoretical results.

\section{Preliminaries}\label{ss:ar}

\begin{figure}[t]
\centering 
\vspace{-4mm}\includegraphics[width=.68\textwidth]{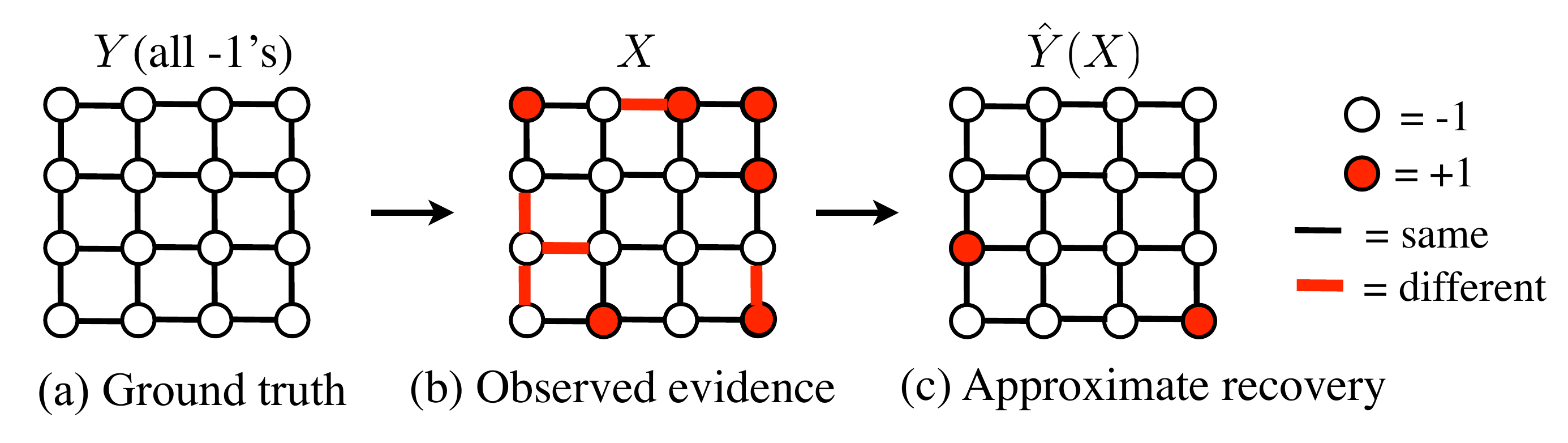}\vspace{-5mm}
\caption{\label{fig:main_figure}
\small Statistical recovery on a grid 
  graph. 
{\bf (a)} Ground truth, which we want to recover. 
{\bf (b)} A possible set of noisy node and edge observations. 
{\bf (c)}
Approximate recovery (prediction), in this case with Hamming error 2.
}
\vspace{-3mm}
\end{figure}

\comment{
We consider the classification problem of ascribing binary labels to a 
collection of objects given pairwise information about them.  For 
example, the pairwise information could indicate similarity or 
dissimilarity between neighboring pixels of an image, or between 
hyperlinked documents. 

A popular approach to binary classification is to propose an objective 
function and optimize over the space of cuts or 2-clusterings. 
For example, in the correlation clustering formulation of this 
problem, the objective function is to maximize the number of similar 
pairs placed in a common cluster plus the number of dissimilar 
pairs placed in different clusters~\cite{bansal2004correlation}. 

Generally, the true motivation for computing a classification is to 
reveal a meaningful labeling of the objects; the proposed optimization 
problem is merely a precisely specified proxy for this more nebulous 
goal.  Perhaps the most natural way to measure ``meaning'' is to 
posit a ground truth classification and measure the error of the 
computed classification with respect to the ground truth --- the 
number of objects that are classified incorrectly. 
Solving an optimization problem exactly or approximately is then 
interesting only inasmuch as it yields a classification with low 
error. 

Approximately recovering a ground truth classification is possible 
only if the input is related in some way to the ground truth. 
In this paper, we assume that the provided pairwise information is 
derived from the ground truth by a random or semi-random process. 
We then evaluate a classification algorithm by its expected error 
on inputs generated by this process.  
At a high level, our goal is to 
understand the following questions. 
\begin{enumerate}

\item (Information-theoretic.)  
{\em Is approximate recovery possible?}  More precisely,
what is the minimum expected error 
  achieved by {\em any} algorithm?   How does the answer depend on the 
  random process generating the inputs?  How does the answer depend on 
  the structure of the graph of pairwise relationships?

\item (Computational.) {\em Is efficient approximate recovery 
  possible?}  
In cases where low expected error is possible in principle, are there 
  polynomial-time algorithms with 
  expected error close to the information-theoretic minimum?
  How does the answer depend on the random process and input graph 
  structure?

\end{enumerate}


The high-level questions above are non-trivial and relevant for many 
different problems.  In this paper, we confine our attention to the 
following basic model; see Section~\ref{s:ext} for several extensions and 
Section~\ref{ss:related} for its relationship to similar models that 
have been studied previously. 
}

We consider the setting of prediction on a graph $G=(V,E)$
where $V$ denotes the set of labels that we want to predict, and $E$
the observed pairwise relationships. 
Let $\Y \in \{-1,
+1\}^N$ denote the ground truth labels, where $N=|V|$. 
The setting is depicted in Figure~\ref{fig:main_figure}.

{\bf The Generative Model and Hamming Error:} 
A random process generates observations for the edges and nodes of $G$ as a function of 
the ground truth. It has two parameters, an edge noise $p \in
[0, .5]$ and a node noise $\q \in [0, .5]$. The generative model is as
follows. For each edge $(u,v)\in E$, the edge observation $X_{uv}$ is
independently sampled to be $\Y_u\Y_v$ with probability $1-p$ (called a {\em good} edge), and $-\Y_u\Y_v$ with probability
$p$ (called a {\em bad} edge).
Observe that adjacent
vertices are likely to have the same (or different) labels if the
observation on the edge connecting them is $+1$ (or $-1$). 
Similarly, for each node $v \in V$, the node observation $\X_v$ is
independently sampled to be $\Y_v$ with probability $1-\q$ ({\em good}
nodes), and $-\Y_v$ with probability $\q$ ({\em bad} nodes). 

\comment{
For a noise parameter $p 
\in [0,\tfrac{1}{2}]$, each edge of $E$ is independently designated 
    {\em bad} (with probability $p$) or {\em good} (with probability 
    $1-p$).  A bad edge $(u,v) \in E$ is labeled by $\X_{uv}=-\Y_u\Y_v$ and a good edge labeled by $\X_{uv}=\Y_u\Y_v$. For another noise parameter $\q \in [0,\tfrac{1}{2}]$, each node of $v \in V$ is labeled by $\X_v=\Y_v$ with probability $1-\q$ ({\em good} nodes) and $\X_v=-\Y_v$ with probability $\q$ ({\em bad} nodes). Hence, a $+1$ label on an edge indicates that its neighboring vertices have the same label whereas a $-1$ label indicates they have the opposite label.
We emphasize that an input offers no explicit hints as to which 
edges or nodes are good and which are bad --- it is simply the graph $G$
with ``$-1/+1$'' labels on the edges and ``$-1/+1$'' labels at the nodes. We denote the random variable corresponding to the labeling of all edges and nodes by $\X$. 
}
A labeling algorithm is a function $\algo:\{-1,+1\}^{E} \times \{
-1,+1 \} ^{V} \to \{-1,+1\}^V$ from graphs with labeled edges and
nodes (i.e., the noisy observations described above) to a labeling of the nodes $V$. We
measure the performance of $\algo$ by the expectation of the Hamming
error (i.e., the number of mispredicted labels) 
over the observation distribution induced by $\Y$. By the {\em error} of an algorithm, we mean 
its worst-case (over $\Y$) expected error (over inputs generated by 
$\Y$).
\ignore{In the basic model, where there is no information to 
disambiguate the outputs $\Yhat$ and $1-\Yhat$, by necessity we measure error 
with respect to the better of these two outputs; equivalently, we 
identify the ground truths $\Y$ and $1-\Y$.\footnote{When the input also 
  comes with (noisy) node labels, identifying $\Y$ with $1-\Y$ neither makes 
  sense nor is required for approximate recovery, see 
  Section~\ref{s:ext}.}}
%
Formally, we denote the error of the algorithm given a value $Y=y$ by $e_y(\algo)$ and define it as:
\begin{equation}\label{eq:gtruths}
e_y({\algo}) = \expect[\X \mid \Y=\y]{\tfrac{1}{2}\left\|\algo(\X)-\y\right\|_1}.
\end{equation}
The overall error is then:
\be \label{eq:definition_of_error}
e({\algo}) = \max_\y e_\y(\algo).
\ee 


{\bf MAP and Marginal Estimators:}
The maximum likelihood (ML) estimator of the ground truth is 
given by $\Yhat \leftarrow \arg\max_{\Y} p(\X \mid \Y)$, where
\be
p(\X \mid \Y) = \prod_{uv \in E} {(1-p)^{\frac{1}{2}  (1+\X_{uv}\Y_u\Y_v)} p^{\frac{1}{2}  (1-\X_{uv}\Y_u\Y_v)}} \cdot \prod_{v\in V} {(1-\q)^{\frac{1}{2} (1+\X_v\Y_v)} \q^{\frac{1}{2} (1-\X_v\Y_v)}}.
\ee
Taking the logarithm and ignoring constants, we see that maximizing $p(\X \mid \Y)$ is equivalent to
\be
\label{eq:map}
\max_{Y} \quad \sum_{uv \in E} \frac{1}{2} \X_{uv}\Y_u\Y_v \log {\frac{1-p}{p}} + \sum_{v\in V}  \frac{1}{2} \X_u\Y_u \log {\frac{1-\q}{\q}},
\ee
or simply $\max_Y \; \sum_{uv \in E} \X_{uv}\Y_u\Y_v + \gamma \sum_{v\in
  V} \X_u\Y_u$, where $\gamma= \log {\tfrac{1-\q}{\q}} / \log {\tfrac{1-p}{p}}$.

Assuming a uniform prior over ground truths $\Y$, MAP inference
reduces to maximum likelihood inference, 
and marginal inference can be performed using $p(\Y \mid \X) \propto p(\X \mid \Y) $.
Standard arguments prove that the algorithm that performs
marginal inference using a  
uniform distribution over $\Y$ achieves the smallest possible error 
according to Eq.~\ref{eq:definition_of_error}; for completeness, we
include a proof in Appendix~\ref{ss:MI_minimax}.
In other words,
marginal inference using a uniform prior minimizes the worst case expected error (i.e., it is minimax optimal).


{\bf Approximate Recovery:}
The interesting regime for structured prediction is when the node noise $\q$ is 
large. In this regime there is no correlation decay, and correctly
predicting a label requires a more global consideration of the node
and edge observations.
The intriguing question --- and the question that reveals the 
importance of the structure of the graph $G$ --- is whether or not 
there are algorithms with small error when the edge noise $p$ is a small 
constant. 
Precisely, we make the following definition.
\begin{definition}[Approximate Recovery]
For a family of graphs $\G$, we say that {\em approximate 
  recovery is 
  possible} if there is a function $f:[0,1] \rightarrow [0,1]$ with 
$\lim_{p \downarrow 0} f(p) = 0$ such that, for every sufficiently small 
$p$ and all $N$ at least a sufficiently large constant $N_0(p)$,
the minimum-possible error of an algorithm on a graph $G \in \G$ with 
$N$ vertices is at most 
$f(p) \cdot N$. 
\end{definition}






\comment{
\subsection{Path Graphs: Approximate Recovery Is Not Always Possible \label{ss:path}}}

{\bf A Non-Example:}
Some graph families admit approximate recovery whereas others do not. 
To illustrate this and impart some intuition about
our model, consider the family of path graphs.
Assume that the node noise $\q$ is extremely close to $.5$, so that node
labels provide no information about the ground truth, while the
edge noise $p$ is an arbitrarily small positive constant.
If $G$ is a path graph on $N$ nodes with $N$ sufficiently large then,
with high probability, for most pairs of nodes, the unique path 
  between them contains a bad edge.  This 
implies that approximate recovery is not possible. 

A bit more formally, imagine that an adversary generates the ground 
truth $\Y$ by picking $i$ uniformly at random from $\{1,2\ldots,N\}$,
giving the first $i$ nodes the label -1 and the last $N-i$ nodes the 
label +1.  With high probability a constant fraction of the 
input's edges are ``-1'' edges --- one good edge consistent 
with the ground truth and the rest bad edges inconsistent with the 
ground truth.  Intuitively, no algorithm can guess which is which,
which means that every algorithm has expected error $\Omega(N)$ with 
respect to the distribution over $\Y$, and hence error $\Omega(N)$ with 
respect to a worst case choice of $\Y$.  Thus, path graphs do not 
allow approximate recovery.\footnote{It is not difficult to make this 
  argument rigorous.  See Section \ref{sec:lb} for a rigorous, and more 
  interesting, version of this lower bound argument.}


\comment{
\subsection{Marginal Inference is Minimax Algorithm}
}

\section{Optimal Recovery in Grid Graphs \label{s:grid}}



This section studies grid graphs.  We devote a
lengthy treatment to them for several reasons.  First, grid graphs are
central in applications such as machine vision.
Second, the grid is a
relatively poor expander and for this reason poses a
number of interesting technical challenges.
%
Third, our algorithm for the grid and other planar graphs
is computationally efficient.
%
Our grid analysis yields matching
upper and lower bounds of $\Theta(p^2N)$ on the
information-theoretically optimal error.



\comment{
\subsection{Preliminaries}
Let $\Yhat$ denote the partition of $V$ into two sets that maximizes the
number of ``-'' edges that get cut plus the number of ``+'' edges that
don't get cut.  
Label all nodes of one side 0 and all nodes of the other side 1; among
the two possibilities, choose the
one that leads to Hamming error at most $N/2$ w.r.t.\ $\Y$.
Let $H$ denote the Hamming error between $\Y$ and $\Yhat$.
}


\subsection{The Algorithm}

We study the algorithm $\ouralg$, which has two stages.
The first stage ignores the node observations and computes a
labeling $\Yhat$ that maximizes the agreement with respect to edge
observations only, i.e.
\begin{equation}\label{eq:obj_first}
	\Yhat \leftarrow \arg\max_{\Y} \sum_{uv \in E} \X_{uv}\Y_u\Y_v.
\end{equation}
Note that $\Yhat$ and $-\Yhat$ agree with precisely the same set of edge
observations, and thus both maximize Eq.~\ref{eq:obj_first}.  The second stage of
algorithm $\ouralg$ outputs $\Yhat$ or $-\Yhat$, according to a
``majority vote'' by the node observations. Precisely, it outputs
$-\Yhat$ if $\sum_{v\in V} \X_v\Y_v < 0$, and $\Yhat$ otherwise.

\comment{
We call a set $S$ {\em maximal} if (1) $S$ is connected; (2) all nodes in $S$ predicted correctly or incorrectly by $\Yhat$ with respect to ground truth $\Y$; (3) there is no other subset $S'$ with $S \sse S'$ satisfying first two property. Note that Flipping Lemma works for all maximal sets $S$. The intuition of our proof is that there we can control the error contribution by type1-5 sets by using fill-in argument.
}
\begin{algorithm}
\caption{$\ouralg(\X)$\label{alg:2-step}}
\begin{algorithmic}
\REQUIRE Edge and node observations $\X$
\ENSURE Node predictions $\Yhat$
\STATE $\Yhat \leftarrow \arg\max_{\Y} \sum_{uv \in E} \X_{uv}\Y_u\Y_v$
\IF{$\sum_{v\in V} \X_v\Y_v < 0$}
	\STATE $\Yhat \leftarrow -\Yhat$
\ENDIF	
\RETURN $\Yhat$
\end{algorithmic}
\end{algorithm}
%
When the graph $G$ is a 2D grid, or more
generally a planar graph, this algorithm can be implemented in
polynomial time by a reduction 
to the maximum-weight matching problem (see
\cite{fisher1966dimer,barahona}).  
By contrast, it is $NP$-hard to maximize the full expression
in~\eqref{eq:map}~\cite{barahona}.

\subsection{An Upper Bound on the Error \label{s:ub}}
Our goal is to prove the following theorem, which shows that
approximate recovery on grids is possible.
\begin{theorem}
\label{thm:ub}
If $p<1/39$, then the algorithm $\ouralg$ achieves error
$e(\ouralg)=O(p^2 N)$. 
\end{theorem}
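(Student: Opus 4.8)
The plan is to decouple the two stages of $\ouralg$. Stage~1 produces a partition of $V$ (the labeling $\Yhat$ up to a global sign) from the edge observations alone, and stage~2 uses the node observations to choose between $\Yhat$ and $-\Yhat$. Write $\eta_{uv}=+1$ for good edges and $-1$ for bad edges, so $X_{uv}=\eta_{uv}\Y_u\Y_v$, and set $Z_w=\Y_w\Yhat_w\in\{-1,+1\}$; then $\sum_{uv}X_{uv}\Yhat_u\Yhat_v=\sum_{uv}\eta_{uv}Z_uZ_v$, so stage~1 equivalently maximizes $\sum_{uv}\eta_{uv}Z_uZ_v$ over $Z$, with the ground truth corresponding to $Z\equiv+1$. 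Let $D=\{w:Z_w=-1\}$ be the set where $\Yhat$ disagrees with $\Y$. First I would prove a \emph{flipping lemma}: for every connected component $S$ of $D$, at least half of the boundary edges $\partial S$ are bad. Indeed, flipping the labels of $S$ in $\Yhat$ changes the objective by $-2\sum_{(u,v)\in\partial S}X_{uv}\Yhat_u\Yhat_v$; since $\Yhat$ is optimal this is $\le 0$, and since $S$ is a maximal connected piece of $D$ every edge of $\partial S$ joins $S\sse D$ to $V\sm D$, so each term equals $-\eta_{uv}$. This gives $b_{\partial S}\ge g_{\partial S}$, where $g,b$ count good/bad boundary edges. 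The same statement holds for the components of $V\sm D$, since $-\Yhat$ is also optimal.

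Next I would bound $\expect{|D'|}$, where $D'$ is the smaller of $D$ and $V\sm D$, by a contour argument in the spirit of percolation. For a fixed vertex $v$, the event $v\in D'$ forces $v$ into a connected region $S$ (of size $\le N/2$) whose boundary contour has length $\ell=|\partial S|\ge 4$ with at least $\ell/2$ bad edges. As the boundary edges are independently bad with probability $p$, $\prob{\ge \ell/2 \text{ of } \ell \text{ bad}}\le \binom{\ell}{\ell/2}p^{\ell/2}\le(4p)^{\ell/2}$, while the number of length-$\ell$ contours surrounding $v$ is at most $\ell\mu^{\ell}$ for the connective constant $\mu\approx 2.64$ of the square lattice. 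Summing the geometric series
\begin{equation}
\prob{v\in D'} \;\le\; \sum_{\ell \ge 4} \ell\,\mu^{\ell}\,(4p)^{\ell/2} \;=\; O(p^2),
\end{equation}
which converges once $4\mu^2 p<1$ (a careful accounting of these constants, using the self-avoiding-polygon estimates cited in the introduction, yields the stated threshold $1/39$) and is dominated by its smallest term $\ell=4$, i.e.\ by the flip of a single interior degree-$4$ vertex, which needs two bad edges and so occurs with probability $\Theta(p^2)$. Summing over vertices gives $\expect{|D'|}=O(p^2N)$; the $O(\sqrt N)$ grid-boundary and corner vertices contribute only lower-order terms.

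Finally I would handle stage~2. Conditioned on $\Yhat$ (hence on $D$), the node observations are independent of the edge observations and still satisfy $\X_v=\Y_v$ with probability $1-\q$. Writing $W=\sum_v \X_v\Yhat_v$, one has $\expect{W\mid D}=(N-2|D|)(1-2\q)$, which is positive whenever $|D|<N/2$; thus when $D=D'$ is the minority side the majority vote selects the correct sign, and the Hamming error is exactly $|D'|$. By Hoeffding's inequality the vote errs with probability at most $e^{-\Omega(N(1-2\q)^2)}$ on the event $|D'|<N/4$, while $\prob{|D'|\ge N/4}=O(p^2)$ by Markov's inequality applied to the bound above. Combining, $e(\ouralg)\le \expect{|D'|}+N\cdot\prob{\text{wrong sign}}=O(p^2N)+N\,e^{-\Omega(N(1-2\q)^2)}$, which is $O(p^2N)$ once $N\ge N_0$ is large enough.

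The main obstacle is the contour bound and, in particular, pinning down the constant $1/39$. The crucial subtlety is that one \emph{cannot} naively union-bound over disagreement regions organized by their area $s$: the number of connected regions of area $s$ grows like $\lambda^s$, which overwhelms the bad-edge probability $(4p)^{\Theta(\sqrt s)}$ forced only by the isoperimetric perimeter. One must instead organize the union bound by boundary (contour) length $\ell$, so that the number of contours $\mu^{\ell}$ is balanced against the bad-edge probability $(4p)^{\ell/2}$, and then carefully enumerate the contours surrounding a fixed vertex via the connective constant to make the resulting geometric series both converge and be dominated by the single-vertex term of order $p^2$. The flipping lemma and the node-voting concentration step are comparatively routine.
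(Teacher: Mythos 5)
Your high-level architecture coincides with the paper's: a flipping lemma for maximal disagreement components, a union bound organized by dual contour length balanced against the connective constant, and a Markov-plus-concentration analysis of the majority vote (your stage-2 argument is essentially the paper's). However, there is a genuine gap at the crucial step where you convert the flipping lemma into a contour event. You claim that $v \in D'$ forces a closed contour of length $\ell$ surrounding $v$, at least $\ell/2$ of whose edges are bad. Your flipping lemma does not give this. It says that at least half of the \emph{entire} boundary $\delta(S)$ of the component $S \ni v$ is bad, but $\delta(S)$ is in general not a single closed contour: it is the disjoint union of an outer contour and of contours around the ``holes'' of $S$ (connected components of $V \sm S$ enclosed by $S$), and if $S$ touches the grid perimeter its boundary need not close up into a polygon of the interior dual lattice at all. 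Nothing in your argument prevents the bad edges from lying entirely on the hole contours; in that case the outer contour surrounding $v$ can consist almost entirely of good edges, and your enumeration over self-avoiding polygons surrounding $v$ never accounts for the event $v \in D'$. This is exactly what the paper's filled-in sets repair: $F(S)$ is $S$ together with all components of $V \sm S$ except one 3-sided component, so that $\delta(F(S))$ is a minimal cut, i.e., a single simple dual cycle. Importantly, the statement that at least half of $\delta(F(S))$ is bad (Lemma~\ref{l:relabel}) is \emph{not} a corollary of the flipping lemma for $S$: it requires flipping all of $F(S)$ at once (optimality of $\Yhat$ gives that at least half of $\delta(F(S))$ agrees with the observations), combined with the facts that $\delta(F(S)) \sse \delta(S)$ and that, by maximality of $S$, every edge of $\delta(F(S))$ joins a mislabeled vertex of $S$ to a correctly labeled neighbor and hence is inconsistent with the ground truth. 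Your proposal has no substitute for this step, and without it the central estimate $\prob{v \in D'} = O(p^2)$ is unsupported.

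A secondary imprecision: regions touching the grid perimeter (the paper's types 2--5) are not surrounded by polygons at all --- their boundaries are dual cycles through the outer-face vertex --- so they require a separate enumeration; the paper counts roughly $\sqrt{N} \cdot 3^{\ell}$ such cycles of length $\ell$ (Lemma~\ref{l:count}(c)), for a total contribution of $O(p\sqrt{N})$, which is lower order only once $N$ exceeds roughly $1/p^2$. Your dismissal (``the $O(\sqrt{N})$ grid-boundary and corner vertices contribute only lower-order terms'') misidentifies the issue, since a perimeter-touching region can contain vertices arbitrarily deep in the interior; also such regions can have boundary length $2$ (a corner vertex), not $\ge 4$. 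Everything else in your proposal --- the flipping lemma and its proof, the decision to organize the union bound by contour length rather than area, the probability bound $(4p)^{\ell/2}$, and the majority-vote analysis --- is sound and matches the paper; even your accounting of constants is consistent with Remark~\ref{rem:prec}, where the threshold $p < 1/39$ arises from requiring $2e\mu^2 p < 1$ with $\mu \le 2.65$.
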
 

{\bf Analysis of First Stage:}
We analyze the two stages of algorithm $\ouralg$ in order.  We first
show that after the first stage, the expected error of the better
of  $\Yhat,-\Yhat$ is $O(p^2 N)$.  We then extend this error bound
to the output of the second stage of the algorithm.


We begin by highlighting a simple but key lemma that characterizes a
structural property of the maximizing assignment in Eq.~\ref{eq:obj_first}.
We use $\delta(S)$ to denote the boundary of $S \sse V$, i.e. the
set of edges with exactly one endpoint in $S$.
\begin{lemma}[Flipping Lemma]\label{l:flipping}
Let $S$ denote a maximal connected subgraph of $G$ with every node of
$S$ incorrectly labelled by $\Yhat$ or $-\Yhat$.  Then at least half the edges
of $\delta(S)$ are bad.
\end{lemma}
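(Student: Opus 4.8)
The plan is to use an exchange (``flipping'') argument driven by the optimality of $\Yhat$ for the first-stage objective in Eq.~\ref{eq:obj_first}. Write $Z$ for whichever of $\Yhat,-\Yhat$ the statement refers to; since both maximize Eq.~\ref{eq:obj_first}, everything below applies verbatim to either choice. The first step is to extract the key structural consequence of maximality of $S$: if $u \in S$ and $(u,v) \in \delta(S)$, then $v \notin S$, and I claim $v$ is labelled \emph{correctly} by $Z$. Indeed, if $v$ were also mislabelled, then $S \cup \{v\}$ would be a strictly larger connected set of mislabelled nodes, contradicting the maximality of $S$. Hence every boundary edge of $S$ joins a mislabelled endpoint (inside $S$, where $Z_u = -\Y_u$) to a correctly labelled endpoint (outside $S$, where $Z_v = \Y_v$), so on each such edge $Z_u Z_v = -\Y_u\Y_v$.

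Next I would form the competitor labelling $Z'$ obtained from $Z$ by flipping the sign of every node in $S$. On any edge with both endpoints inside $S$ or both outside $S$ the product $Z_uZ_v$ is unchanged, while on each boundary edge it is negated. Comparing the two objective values therefore cancels everything except the boundary terms:
\be
\sum_{uv \in E} \X_{uv}Z_uZ_v - \sum_{uv \in E} \X_{uv}Z'_uZ'_v = 2\sum_{uv \in \delta(S)} \X_{uv}Z_uZ_v.
\ee
Because $Z$ maximizes the objective and $Z'$ is a feasible alternative, the left-hand side is nonnegative, and so $\sum_{uv \in \delta(S)} \X_{uv}Z_uZ_v \ge 0$.

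Finally I would evaluate each boundary term using the identity $Z_uZ_v = -\Y_u\Y_v$ from the first step. A good boundary edge has $\X_{uv} = \Y_u\Y_v$ and thus contributes $\X_{uv}Z_uZ_v = -1$, whereas a bad boundary edge has $\X_{uv} = -\Y_u\Y_v$ and contributes $+1$. Writing $B$ and $g$ for the numbers of bad and good edges in $\delta(S)$, the inequality $\sum_{uv \in \delta(S)} \X_{uv}Z_uZ_v \ge 0$ becomes $B - g \ge 0$, i.e.\ $B \ge g$, which is precisely the claim that at least half the edges of $\delta(S)$ are bad. The proof is short, and the only substantive ingredient is the maximality observation in the first step: it is what guarantees that every boundary edge has exactly one mislabelled endpoint, and hence flips sign under $Z \mapsto Z'$, leaving a clean count of good versus bad edges. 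I do not expect any genuine obstacle beyond stating that observation carefully.
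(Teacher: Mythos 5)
Your proof is correct and follows essentially the same route as the paper's: maximality of $S$ forces every boundary edge to have exactly one mislabelled endpoint (so it disagrees with the ground truth), and optimality of $\Yhat$ under the flip of $S$ forces at least half the boundary edges to agree with the observations, which together give the claim. Your version simply makes the paper's verbal counting explicit as the inequality $\sum_{uv \in \delta(S)} \X_{uv}Z_uZ_v \ge 0$ and the $B - g \ge 0$ bookkeeping.
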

\begin{proof}
The computed labeling $\Yhat$ (or $-\Yhat$) agrees with the edge
  observations on at least half the edges of $\delta(S)$ ---
  otherwise, flipping the labels of all nodes in $S$ would yield a new
  labeling with agreement strictly higher than $\Yhat$ (or $-\Yhat$).
On the other hand, since $S$ is maximal, for every edge $e \in \delta(S)$,
exactly one endpoint of $e$ is correctly labeled.  Thus {\em every} edge
of $\delta(S)$ is inconsistent with the ground truth.  These two
statements are compatible only if at least half the edges of
$\delta(S)$ are bad.
\end{proof}


Call a set $S$ {\em bad} if at least
half its boundary $\delta(S)$ is bad.
The Flipping Lemma motivates bounding the probability that a given set
is bad, and then enumerating over sets $S$.  This approach can be made
to work only if the collection of sets $S$ is chosen carefully ---
otherwise, there are far too many sets and this approach fails to
yield a non-trivial error bound.

To begin the analysis,
let $H$ denote the error of our algorithm on a random input.
$H$ seems difficult to analyze directly, so we introduce a
simpler-to-analyze upper bound.
This requires some definitions.
Let $\C$
denote the subsets $S$ of $V$ such that the induced subgraph $G[S]$ is
connected.  We classify subsets $S$ of $\C$ into 6 categories (see Figure~\ref{fig:type1-6}):
\begin{enumerate}
\item $S$ contains no vertices on the perimeter of $G$;
\item $S$ contains vertices from exactly one side of the perimeter of
$G$;
\item $S$ contains vertices from exactly two sides of the perimeter of
$G$, and these two sides are adjacent;
\item $S$ contains vertices from exactly two sides of the perimeter of
$G$, and these two sides are opposite;

\item $S$ contains vertices from exactly three sides of the perimeter of
$G$;
\item $S$ contains vertices from all four sides of the perimeter         
  of $G$.
\end{enumerate}
\begin{figure}[!h]
\centering
\includegraphics[width=.6\textwidth]{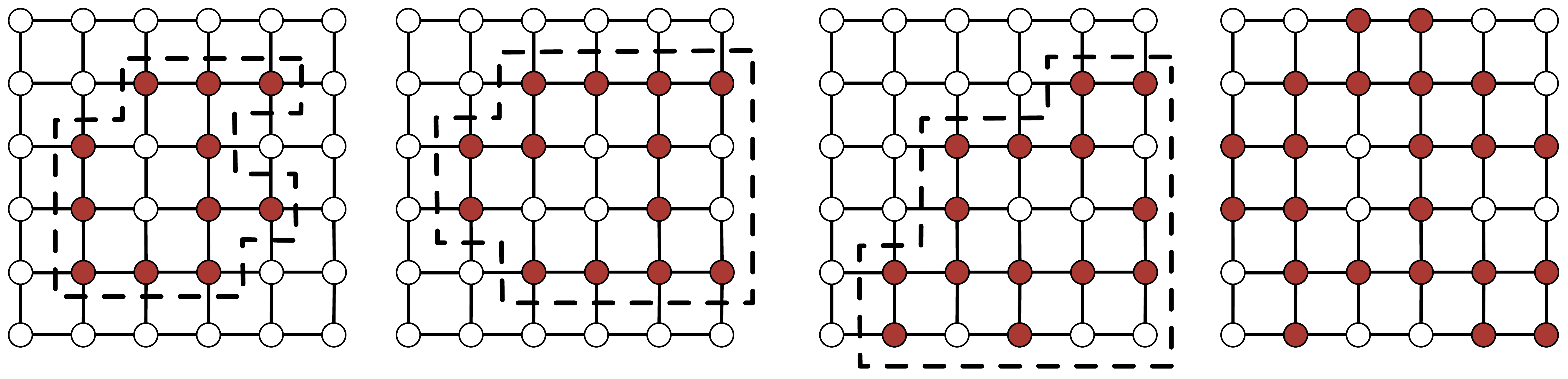}
\vspace{-3mm}
\caption{Examples of type 1, 2, 3, and 6 regions, left-to-right. \label{fig:type1-6}}
\end{figure}
%
%
%
%
%
%
%
Let $\C_{<6}$ denote the set of all $S\subset V$ from one of the first 5 categories.  For
a set $S\in \C_{<6}$, we define a corresponding {\em filled in} set
$F(S)$.  Consider the connected components $C_1,\ldots,C_k$ of
$G[V \sm S]$ for such a subset $S$.  
Call such a connected component {\em 3-sided} if it includes vertices
from at least three sides of the grid $G$.
For every $S \in \C_{<6}$ there is at least one 3-sided component; it
is unique if $S$ has type 1, 2, 3, or 5.
We define $F(S)$ as the union of $S$ with all the connected components of
$G[V \sm S]$ except for a single 3-sided one.  
Appendix~\ref{sec:illustration_filled_in_sets}
illustrates the filling-in procedure.
$F(S)$ is not defined for type-6
components $S$. 
Observe that $F(S) \supseteq S$.
Let $\F = \{ F(S) : S\in \C_{<6}\}$ denote the set of all such
filled-in components.

\begin{lemma}\label{l:disjoint}
If $S_1,S_2$ are disjoint and not type 6, then $F(S_1),F(S_2)$ are
distinct and not type 6.
\end{lemma}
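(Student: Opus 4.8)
The statement bundles two claims: that $F(S_1)$ and $F(S_2)$ are distinct, and that neither is type 6. I would first record the common setup. By construction each $F(S_i)=V\sm D_i$, where $D_i$ is the single 3-sided component of $G[V\sm S_i]$ that is deleted; hence $F(S_1)=F(S_2)$ if and only if $D_1=D_2$. I would also note that both $F(S_i)$ and $D_i$ are connected: $S_i$ is connected, every component of $G[V\sm S_i]$ is adjacent to $S_i$ (since $G$ is connected), so $F(S_i)$ is the union of $S_i$ with components each touching it, while $D_i$ is connected by definition. This reduces distinctness to showing $D_1\neq D_2$.

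For distinctness, suppose toward a contradiction that $D_1=D_2=:D$. Since $D$ is a connected component of $G[V\sm S_1]$, every vertex outside $D$ that is adjacent to $D$ must lie in $S_1$ (otherwise it would join $D$'s component). The same reasoning applied to $S_2$ shows every such external neighbor lies in $S_2$. Thus the external boundary of $D$ is contained in $S_1\cap S_2=\emptyset$, so $D$ has no edges leaving it; as $G$ is connected and $D\neq\emptyset$, this forces $D=V$, contradicting that $S_1$ is a nonempty subset of $V\sm D$. Hence $D_1\neq D_2$ and $F(S_1)\neq F(S_2)$. This direction is short, and the disjointness hypothesis is used exactly here.

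The ``not type 6'' claim holds for each $F(S_i)$ individually and does not require disjointness; I would prove it separately. Write $A=F(S)$ and $D=V\sm A$, both connected, with $D$ meeting at least three sides. The plan is to invoke a planar (Jordan-curve) separation fact: if two complementary connected vertex sets of the grid each meet the perimeter, then the perimeter cycle splits into exactly one arc of each set. Indeed, if $D$ met the perimeter in two arcs separated by $A$-arcs, one could pick boundary vertices $a,c\in D$ and $b,d\in A$ interleaved cyclically; a $D$-path from $a$ to $c$ and an $A$-path from $b$ to $d$ would be forced to cross inside the disk, which is impossible for vertex-disjoint paths in a planar embedding.

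Granting that $D$ occupies a single perimeter arc meeting $\ge 3$ sides, those sides must be consecutive in cyclic order, and the $D$-arc then contains the \emph{entire} middle side; consequently $A=F(S)$ contains no vertex of that side, meets at most three sides, and is not type 6. I expect the main obstacle to be making this topological arc argument fully rigorous on the grid, in particular the planar separation step and the bookkeeping at the four corner vertices, which belong to two sides at once. I would handle these by fixing a planar embedding of the grid in a disk and assigning each corner to a single side, after which the ``one arc per set'' fact and the consecutive-sides count go through as above.
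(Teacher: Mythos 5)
Your proposal is correct in substance, but the two halves compare differently with the paper's proof. The distinctness half is essentially the paper's argument: both rest on the observation that any vertex outside $D_i$ adjacent to $D_i$ must lie in $S_i$ (distinct components of $G[V \setminus S_i]$ are non-adjacent), so a common deleted component $D_1=D_2$ would force a common vertex of $S_1$ and $S_2$. The paper phrases this as ``$\delta(F(S))$ is a non-empty subset of $\delta(S)$'' and concludes $S_1 \cap S_2 \neq \emptyset$ directly; you take the equivalent detour $\delta(D)=\emptyset \Rightarrow D=V \Rightarrow$ contradiction. The ``not type 6'' half is where you genuinely diverge. The paper disposes of it in one line: if $S$ is not type 6, then every 3-sided component of $G[V\setminus S]$ contains an entire side of the perimeter, so $F(S)=V\setminus D$ misses that side; no proof of this geometric claim is given. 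You instead work with the complementary pair $(F(S),D)$ directly: both are connected, so a Jordan-curve/interleaving argument shows the perimeter cycle splits into one arc per set, and the $D$-arc must swallow a full side. Your route is more self-contained --- it supplies the planar-topology argument the paper leaves implicit, and it uses only that $D$ is 3-sided and $F(S)$ is connected, never that $S$ itself avoids a side. What the paper's formulation buys is brevity; what yours buys is an actual proof of the key geometric step.

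Two cases in your write-up need patching, both within reach of your own machinery. First, $F(S)$ may contain no perimeter vertex at all: if $S$ is type 1, the entire perimeter lies in one component of $G[V\setminus S]$, which is necessarily the deleted $D$, so $F(S)$ is again type 1. Here the hypothesis of your separation fact (``each meets the perimeter'') fails, but the conclusion is trivial. Second --- and this is the \emph{typical} case for types 1--3, not an exotic one --- the $D$-arc may meet all four sides (e.g., $S$ a short segment hanging from the top side, with $F(S)=S$ and $D$ everything else), in which case ``the middle of three consecutive sides'' is undefined and your stated justification does not apply. The conclusion still follows from the arc structure you established: the perimeter has exactly two transitions between the $F(S)$-arc and the $D$-arc, each transition edge lies along exactly one side, so at most two sides contain vertices of both arcs; any side free of transitions lies wholly in one arc, and since $D$ meets all four sides, at least two full sides lie in the $D$-arc, so $F(S)$ again misses a side. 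With these two cases added, your argument is complete.
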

\begin{proof}
If a set $S$ is not type 6, then every 3-sided component of $G[V \sm
  S]$ contains one entire side of the grid perimeter.  Since $F(S)$
  excludes a 3-sided component, it cannot be type 6.

Also, for a set $S$ that is not type 6, the boundary of $F(S)$ is a
non-empty subset of that of $S$.  Thus, the non-empty set of endpoints
of $\delta(F(S))$ that lie in $F(S)$ also lie in $S$.  This implies
that if $F(S_1) = F(S_2)$, then $S_1 \cap S_2 \neq \emptyset$.
\end{proof}
\comment{
Now let $B$ denote the mislabeled vertices of the better of $\Yhat$, $-\Yhat$ and let
$B_1,\ldots,B_k$ denote the connected components of $G[B]$. We can
assume that none of the $B_i$'s are type 6 by  
outputting $-\Yhat$ if necessary. This can be done since our error
definition is the better of the $\Yhat$ and $-\Yhat$ (see~\eqref{eq:gtruths}).  
If $\Yhat$ has a mislabeled type 6 component then it would be correctly
labeled if we output $1-\Yhat$.  
Moreover, if there is a correctly labeled type 6 component there cannot be any mislabeled type 6 
components since correctly labeled type 6 component contains a path from left side to right side 
preventing any mislabeled component to connect bottom side to top side.
}
The following error upper bound applies to whichever of $\Yhat,-\Yhat$
does not incorrectly classify a type-6 set 
(there is at most one type-6 set, so at least one of them has
this property).
Let $B$ denote the mislabeled vertices of 
such a labeling and let
$B_1,\ldots,B_k$ denote the connected components (of types 1--5) of $G[B]$. 
The next lemma extends the Flipping Lemma.
\begin{lemma}\label{l:relabel}
For every set $B_i$, the filled-in set $F(B_i)$ is bad.
\end{lemma}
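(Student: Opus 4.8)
The plan is to reprove the Flipping Lemma, but for the filled-in set $F(B_i)$ rather than for $B_i$ itself. The crucial observation is that, by construction, $F(B_i)$ is obtained from $V$ by deleting a single excluded $3$-sided component; writing $C_j$ for that component, we have $F(B_i) = V \sm C_j$, and hence $\delta(F(B_i)) = \delta(C_j)$. Moreover, since $C_j$ is a connected component of $G[V \sm B_i]$, every edge of $\delta(C_j)$ must have its non-$C_j$ endpoint inside $B_i$: a neighbor of $C_j$ lying outside $C_j$ but still in $V \sm B_i$ would be in the same component as $C_j$, a contradiction. Thus $\delta(F(B_i)) = \delta(C_j)$ is exactly the set of edges joining $C_j$ to $B_i$.

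With this identification in hand, I would run the same two-part argument used in the Flipping Lemma. For the first part, optimality of $\Yhat$ applied to the vertex set $F(B_i)$ shows that $\Yhat$ agrees with the edge observations on at least half the edges of $\delta(F(B_i))$: flipping all labels in $F(B_i)$ toggles only the edges of $\delta(F(B_i))$, so if fewer than half of them agreed with $\Yhat$ this flip would strictly increase the objective in Eq.~\ref{eq:obj_first}, contradicting optimality. This half-agreement statement holds for an arbitrary vertex set, so there is no obstruction to applying it to $F(B_i)$.

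For the second part, I would show that each edge of $\delta(F(B_i))$ has exactly one correctly-labeled endpoint. Its $B_i$-endpoint is mislabeled because $B_i \sse B$. Its $C_j$-endpoint is a vertex outside $B_i$ adjacent to $B_i$; since $B_i$ is a connected component of $G[B]$, any mislabeled neighbor of $B_i$ would itself lie in $B_i$, so this endpoint is correctly labeled. Consequently $\Yhat_u\Yhat_v = -\Y_u\Y_v$ on every edge of $\delta(F(B_i))$, which means that an edge here agrees with $\Yhat$ precisely when it is bad. Combining this with the half-agreement from the first part yields that at least half of $\delta(F(B_i))$ is bad, i.e.\ $F(B_i)$ is bad.

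I expect the only real subtlety — and the step I would be most careful about — to be the structural claim that $\delta(F(B_i)) = \delta(C_j)$ consists exactly of $C_j$-to-$B_i$ edges, together with the conclusion that the $C_j$-endpoints are correctly labeled. Both follow cleanly once one uses that $C_j$ is a \emph{component} of $G[V \sm B_i]$ and that $B_i$ is a \emph{maximal} connected block of mislabeled vertices; the remainder is a verbatim repetition of the Flipping Lemma's balancing of optimality against ground-truth consistency.
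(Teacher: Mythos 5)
Your proof is correct and takes essentially the same route as the paper's: optimality of $\Yhat$ under flipping all of $F(B_i)$ yields half-agreement on $\delta(F(B_i))$, while maximality of $B_i$ yields that every edge of $\delta(F(B_i))$ has exactly one correctly labeled endpoint, so the agreeing boundary edges are precisely the bad ones. Your explicit structural verification that $\delta(F(B_i)) = \delta(C_j)$ consists exactly of $B_i$-to-$C_j$ edges simply spells out what the paper states tersely (that the boundary of $F(B_i)$ is a subset of that of $B_i$, with its inner endpoints in $B_i$).
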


\begin{proof}
We first claim that $\Yhat$ agrees with the data on at least half the
edges of $\delta(F(B_i))$; the same is true of $-\Yhat$.
The reason is that
flipping the label of every vertex of $F(B_i)$ increases the
agreement with the data by the number of disagreeing edges of
$\delta(F(B_i))$ minus the number of agreeing edges of
$\delta(F(B_i))$, and this difference is non-positive by the optimality
of $\Yhat$.

On the other hand, since $B_i$ is maximal, every neighbor of $B_i$ is
correctly labeled in $\Yhat$.  Since the neighborhood of $F(B_i)$ is a
subset of $B_i$, this also holds for $F(B_i)$.  Thus, $\Yhat$ disagrees
with $\Y$ on every edge of $\delta(F(B_i))$.
\end{proof}

A crucial point is that
Lemmas~\ref{l:disjoint} and~\ref{l:relabel} imply that
the random variable
\begin{equation}\label{eq:T}
T = \sum_{F \in \F} |F| \cdot 1_{F \mbox{~is bad}}
\end{equation}
is an upper bound on the error $H$ with probability 1.
We now upper bound the easier-to-analyze quantity $T$.
The first lemma provides
an upper bound on the
probability that a set~$S$ is bad, as a function of its boundary
size~$|\delta(S)|$. 
\begin{lemma}\label{l:chernoff}
For every set $S$ with $|\delta(S)| = i$,
$\prob{\mbox{$S$ is bad}} \le (3\sqrt{p})^i$.
\end{lemma}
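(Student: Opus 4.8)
The plan is to observe that this is fundamentally a binomial tail bound. By the generative model, each edge of $G$ is deemed bad independently with probability $p$; in particular the badness indicators of the $i$ edges of $\delta(S)$ are i.i.d.\ Bernoulli($p$). Hence the number of bad edges in $\delta(S)$ is distributed as $\mathrm{Bin}(i,p)$, and the event ``$S$ is bad'' is precisely the upper-tail event that this count is at least $i/2$. So the lemma reduces to estimating a binomial upper tail at the threshold $i/2$.

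Concretely, writing the probability as a sum over the number $k \ge i/2$ of bad boundary edges,
\begin{equation}
\prob{\text{$S$ is bad}} = \sum_{k \ge i/2} \binom{i}{k}\, p^k (1-p)^{i-k},
\end{equation}
I would bound each summand crudely. Since $k \ge i/2$ and $p \le \tfrac12 < 1$, we have $p^k \le p^{i/2}$, and trivially $(1-p)^{i-k} \le 1$. Pulling out the common factor $p^{i/2}$ and using $\sum_{k}\binom{i}{k} \le 2^i$ gives
\begin{equation}
\prob{\text{$S$ is bad}} \le 2^i\, p^{i/2} = (2\sqrt{p})^i \le (3\sqrt{p})^i,
\end{equation}
which is the claimed bound.

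I expect essentially no obstacle here: the only thing to verify is the independence of the edge-badness indicators on $\delta(S)$, which is immediate from the generative model. The slack between the $2$ I obtain and the $3$ stated in the lemma shows the estimate is far from tight, so there is no delicacy in the constant; one could equally invoke a Chernoff bound for $\mathrm{Bin}(i,p)$ exceeding $i/2$ (a deviation above its mean $pi$ since $p<\tfrac12$), but the one-line counting argument above is cleaner and already suffices. The important feature is the geometric decay $(3\sqrt{p})^i$ in the boundary size $i$: downstream this will be multiplied against an enumeration of the connected (filled-in) sets $F$ of a given boundary size, and the decay in $i$ is exactly what is needed to dominate the number of such sets and thereby bound $\expect{T}$.
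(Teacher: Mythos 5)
Your proof is correct and takes essentially the same route as the paper's: both reduce the event to the upper tail of a $\mathrm{Bin}(i,p)$ count at threshold $i/2$ and bound it crudely, the paper via a union bound giving $\binom{i}{i/2}p^{i/2} \le (2ep)^{i/2} \le (3\sqrt{p})^i$, you via $p^k \le p^{i/2}$, $(1-p)^{i-k}\le 1$, and $\sum_k \binom{i}{k} \le 2^i$. Your version even yields the slightly sharper constant $(2\sqrt{p})^i$, so the stated bound follows with room to spare.
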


\begin{proof}
By the definition of a bad set, $\prob{\mbox{$S$ is bad}}$
equals the probability that at least half of $\delta(S)$ are bad
edges. Since $|\delta(S)|=i$ this is the probability 
that at least $i\over 2$ edges are bad. Since these events are IID, we can bound it via:
\be
\prob{\sum_j Z_j \geq {i \over 2}} <  {{i} \choose {i\over 2}} p^{i\over 2} \leq (2e)^{i\over 2} p^{i\over 2} \leq \left(3\sqrt{p}\right)^i
\ee
where $Z_j$ is the indicator event of the $i$-th edge being bad.
\end{proof}

The probability bound in Lemma~\ref{l:chernoff}
is naturally parameterized by the number of
boundary edges.  Because of this, we face two tasks in upper
bounding~$T$.  First, $T$ counts the number of {\em nodes} of bad
filled-in sets $F \in \F$, not boundary sizes.  The next lemma states
that the number of nodes of such a set cannot be more than the
square of its boundary size.
\begin{lemma}\label{l:square}
For $F \in \F$: (1) $|F| \le |\delta(F)|^2$; 
(2) if $F$ is a type-1 region, then $|F| \le                           
\tfrac{1}{16}|\delta(F)|^2$.



\end{lemma}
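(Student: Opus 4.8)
The plan is to reduce both inequalities to a single isoperimetric-type estimate that relates the number of occupied rows and columns of $F$ to $|\delta(F)|$. Let $w$ and $h$ denote the number of distinct columns, respectively rows, of the grid that contain at least one vertex of $F$. Since every vertex of $F$ lies in one of these $w$ columns and $h$ rows, I immediately get the volume bound $|F| \le wh$. The whole argument then rests on a matching lower bound for $|\delta(F)|$ in terms of $w$ and $h$.

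First I would establish a \emph{boundary-counting bound}: the number of vertical edges in $\delta(F)$ is at least $2w - t - b$, and the number of horizontal edges is at least $2h - l - r$, where $t,b$ count the columns of $F$ reaching the top/bottom side of the grid and $l,r$ count the rows reaching the left/right side. The proof is a per-column, per-row charge. In each occupied column the topmost vertex of $F$ contributes an upward boundary edge unless that column reaches the top of the grid, and symmetrically for the bottommost vertex; the resulting $(w-t)+(w-b)$ edges lie in distinct columns or point in opposite directions, hence are distinct vertical edges. The analogous count for rows gives the horizontal edges, and since vertical and horizontal edges are disjoint, $|\delta(F)| \ge (2w-t-b)+(2h-l-r)$, where $t,b \le w$ and $l,r \le h$.

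For part (2) this is immediate: a type-1 region touches no side, so $t=b=l=r=0$ and $|\delta(F)| \ge 2(w+h) \ge 4\sqrt{wh} \ge 4\sqrt{|F|}$ by AM--GM together with $|F|\le wh$; squaring gives $|F| \le \tfrac{1}{16}|\delta(F)|^2$. The loss of the factor $16$ relative to part (1) is exactly the penalty for an interior region, which wastes none of its perimeter on the grid boundary.

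For part (1) I would instead show $|\delta(F)| \ge \max(w,h)$, which with $|F| \le wh \le \max(w,h)^2$ finishes the proof. The hard part — and the only place the type classification enters — is that when $F$ reaches the top and bottom in many columns, the terms $t,b$ can be as large as $w$, degrading the vertical count all the way to $0$ (and symmetrically horizontally). The resolution combines connectivity with the hypothesis that $F$ is \emph{not} type 6: a connected set touching two opposite sides must occupy every intermediate row or column, so touching both top and bottom forces $h=n$, and touching both left and right forces $w=n$. Hence if $w \ge h$, then $F$ cannot touch both top and bottom (that would give $h=n\ge w$, so $w=h=n$ and type 6), so one of $t,b$ vanishes while the other is at most $w$, and the vertical count yields $|\delta(F)| \ge 2w-t-b \ge w = \max(w,h)$; the case $h \ge w$ is symmetric via the horizontal count. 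I expect the distinctness bookkeeping in the boundary count and this opposite-sides/connectivity step to be the delicate points, with everything else following from AM--GM and the volume bound.
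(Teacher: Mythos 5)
Your proof is correct and establishes both parts of the lemma, but it is organized quite differently from the paper's. The paper argues type by type: for types 4 and 5 it notes $|\delta(F)|\ge\sqrt{N}$ so the bound is trivial; for types 1, 2, 3 it takes the smallest bounding rectangle of $F$ with side lengths $k,m$, uses $|F|\le km$, and then asserts (without proof) the perimeter bounds $|\delta(F)|\ge 2k+2m$, $|\delta(F)|\ge\min\{2k+m,\,k+2m\}$, and $|\delta(F)|\ge k+m$ respectively. Your single charging lemma, $|\delta(F)|\ge(2w-t-b)+(2h-l-r)$, proves all of these per-type assertions in one stroke (for connected $F$ the occupancy counts $w,h$ coincide with the bounding-box sides $k,m$, since a connected set occupies every column and row its bounding box spans), and your dichotomy ``either one of $t,b$ (or $l,r$) vanishes, or $F$ is type 6'' replaces the paper's explicit enumeration of which sides each type touches. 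What your route buys is uniformity and rigor: the boundary lower bounds the paper leaves implicit are actually proved, and types 2--5 need no separate treatment; what the paper's route buys is brevity. Two hypotheses you lean on should be made explicit, and both do hold for the sets in question: (i) connectivity of $F$, required by your opposite-sides argument --- this holds because $F(S)$ is the union of the connected set $S$ with connected components of $G[V\setminus S]$, each of which contains a vertex adjacent to $S$; and (ii) that no $F\in\F$ is type 6, which is exactly the content of Lemma~\ref{l:disjoint}. With those two sentences added, your argument is a complete and self-contained substitute for the paper's proof.
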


\begin{proof}
If $F$ is a type 4 or 5 set, then $|\delta(F)|\geq  \sqrt{N}$ and the bound is
trivial.  If $F$ is a type
 1 set, let $U$ be the smallest rectangle in the dual graph that
 contains $F$. Let $k,m$ denote the side lengths of $U$. Then: 
$|F|\leq km \leq \frac{1}{16} (2k+2m)^2\leq
 \frac{1}{16}|\delta(F)|^2$. Similarly for type 2 sets we have
 $|F|\leq km \leq  \min \left\{ (2k+m)^2,  (k+2m)^2\right\} \leq
 |\delta(F)|^2$. Finally, for type 3 sets have $|F|\leq km \leq (k+m)^2
 \leq |\delta(F)|^2$. 
\end{proof}

The second task in upper bounding $T$ is to count the number of
filled-in sets $F \in \F$ that have a given boundary size.
We do this by counting simple cycles in the dual graph.

\begin{lemma}\label{l:count}
Let $i$ be a positive integer.
\begin{itemize}

\item [(a)] 
If $i$ is odd or 2, then there are no type-1 sets $F \in \F$
  with $|\delta(F)| = i$;

\item [(b)] 
If $i$ is even and at least 4, then there are at most 
$\frac{N \cdot 4 \cdot 3^{i-2}}{2i}
= N \cdot \frac{2 \cdot 3^{i-2}}{i}
$ 
type 1 sets $F \in \F$ with $|\delta(F)| = i$;

\item [(c)] 
If $i$ is at least 2, then there are at most 
$2\sqrt{N} \cdot 3^{i-2}$
type 2--5 sets $F \in \F$ with $|\delta(F)| = i$.

\end{itemize}
\end{lemma}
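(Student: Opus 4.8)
The plan is to translate the combinatorial count into a count of contours in the planar dual of the grid, as the remark preceding the lemma suggests. The starting point is that each filled-in set $F\in\F$ is determined by its edge boundary $\delta(F)$, which traces a rectilinear curve on the dual lattice; the number of unit dual edges on this curve equals $|\delta(F)|$. Crucially, the filling-in construction makes $V\sm F$ connected (it is precisely the single retained 3-sided component), so both $F$ and its complement are connected, and hence the contour is a \emph{single} simple curve: a simple closed cycle in the interior dual grid when $F$ is type 1, and a single open self-avoiding path with both endpoints on the perimeter when $F$ is type 2--5. I would first record that $F\mapsto\delta(F)$ is injective on $\F$: a simple closed contour has a well-defined interior, and a simple open contour splits the grid into two pieces of which exactly one touches at most three perimeter sides, so in either case the contour plus the type label recovers $F$. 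Thus counting contours of length $i$ upper-bounds the number of filled-in sets with $|\delta(F)|=i$.

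For the type-1 statements (parts (a) and (b)) I would use that the interior dual of the square grid is again a square grid, hence bipartite with girth $4$. A simple cycle in a bipartite graph has even length, and its length is at least the girth, so no type-1 set has $|\delta(F)|$ odd and none has $|\delta(F)|=2$ (the smallest possible boundary, around one interior vertex, has size $4$); this is part (a). For part (b) I count simple cycles of length $i$ by the standard self-avoiding-polygon bound: fixing a base vertex, a closed self-avoiding walk of length $i$ has at most $4$ choices for its first step, at most $3$ for each of the next $i-2$ steps (it may not revisit a vertex), and a forced final step, giving at most $4\cdot 3^{i-2}$ such walks; summing over the at most $N$ dual vertices and dividing by $2i$ for the $i$ base points and $2$ orientations of each cycle yields $\tfrac{N\cdot 4\cdot 3^{i-2}}{2i}$.

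For part (c) the contour is an open self-avoiding dual path with endpoints on the perimeter. I would anchor the count at one perimeter endpoint: the $\sqrt N\times\sqrt N$ grid has at most $4\sqrt N$ perimeter positions, and since each undirected path is traced from either end, fixing the orientation leaves at most $2\sqrt N$ anchored starting configurations. From the anchored start the self-avoiding continuation contributes at most $3$ choices per intermediate step, while the steps leaving and returning to the perimeter are constrained, which accounts for the exponent $i-2$ and gives the bound $2\sqrt N\cdot 3^{i-2}$.

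The main obstacle is Part (c): one must be sure that a type 2--5 filled-in set produces a single open arc rather than several disjoint boundary arcs, and then extract the precise constant $2\sqrt N$ rather than merely $O(\sqrt N)$. The single-arc property is exactly what the filling-in buys us, since deleting all but one 3-sided component makes $V\sm F$ connected; the remaining work is the careful bookkeeping of perimeter anchor positions and of the constrained first and last steps to land on $3^{i-2}$ rather than $3^{i-1}$ or $3^{i}$, which I regard as routine once the dual correspondence is set up. The parity-and-girth argument for (a) and the self-avoiding-polygon count for (b) are comparatively standard.
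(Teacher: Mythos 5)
You take essentially the same route as the paper: pass to the planar dual, where filled-in sets correspond to simple contours (closed cycles avoiding the outer face for type~1; contours meeting the outer face for types 2--5), use bipartiteness of the dual grid for part~(a), and bound parts~(b) and~(c) by anchored self-avoiding walk counts with $3$ choices per step, divided by the traversal symmetry --- exactly the paper's argument, with your ``open path with endpoints on the perimeter'' being the paper's ``simple cycle through the outer-face vertex $z$'' with $z$ deleted. The one caveat --- which the paper's own proof shares, since it silently treats the set-to-cycle correspondence as one-to-one --- is that your injectivity justification for types 2--5 is false as stated: an open contour joining two opposite sides of the grid (e.g., a straight vertical cut) splits it into two pieces each touching \emph{exactly three} sides, and both pieces are filled-in sets, so the correspondence can be 2-to-1; this costs at most a factor of $2$ in part~(c), which is harmless for how the lemma is used since that term contributes only the lower-order $O(p\sqrt{N})$ part of the error bound.
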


\begin{proof}
Recall that, by construction, a filled-in set $F \in \F$ is such that
both $G[F]$ and $G[V \sm F]$ are connected.  This is equivalent to the
property that $\delta(F)$ is a minimal cut of $G$ --- there is
no subset $S$ such that $\delta(S)$ is a strict subset of $\delta(F)$.
In a planar graph such as $G$, this is equivalent to the property that
the dual of $\delta(F)$ is a simple cycle in the dual graph $G^d$ of $G$
(e.g., see Section 4.6 of 
\cite{Diestel-GraphBook})
Note that the dual graph $G^d$ is just an $(n-1) \times (n-1)$ grid
--- with one vertex per ``grid cell'' of $G$ --- plus an extra vertex $z$
of degree $4(\sqrt{N}-1)$ that corresponds to the outer face of $G$.
The type-1 sets of $\F$ are in dual correspondence with the simple cycles
of $G^d$ that do not include $z$, the other sets of $\F$ are in
dual correspondence with the simple cycles of $G^d$ that do include $z$.
The cardinality of the boundary $|\delta(F)|$ equals the length of the
corresponding dual cycle.

Part~(a) follows from the fact that $G^d \sm \{z\}$ is a bipartite
graph, with only even cycles, and with no 2-cycles.

For part~(b), we count simple cycles of $G^d$ of length $i$ that
do not include $z$.  There are at most $N$ choices for a starting
point.  There are at most 4 choices for the first edge, at most 3
choices for the next 
$(i-2)$ edges, and at most one choice at the final step to return to
the starting point.  Each simple cycle of $G^d \sm \{z\}$ is counted
$2i$ times in this way, once for each choice of the starting point and
the orientation.  

For part~(c), we count simple cycles of $G^d$ of length $i$ that
include $z$.  We start the cycle at $z$, and there are at most
$4\sqrt{N}$ choices for the first node.  There are at most 3 choices
for the next $i-2$ edges, and at most one choice for the final edge.
This counts each cycle twice, once in each orientation.
\end{proof}

Let $\F_1 \sse \F$ denote the type-1 sets of $\F$.
The computation below
shows that
\begin{equation}\label{eq:T_error}
	\expect{T} \le  cp^2N + O(p\sqrt{N})
\end{equation}
for a constant $c > 0$ that is independent of $p$ and $N$, which
completes the analysis of the first stage of the algorithm~$\ouralg$.
The intuition for why this computation works out is that
Lemma~\ref{l:count} implies that there is only an exponential
number 
of relevant regions to sum over;
Lemma~\ref{l:square} implies that the Hamming error is quadratically
related to the (bad) boundary size; and Lemma~\ref{l:chernoff} implies
that the probability of a bad boundary is decreasing exponentially in
$i$ (with base $3\sqrt{p}$).  Provided $p$ is at most a sufficiently small
constant (independent of $N$), the probability term dominates
and so the expected error is small.

Formally, we have
\begin{eqnarray}
\label{eq:main1}
\expect{T} & = & \sum_{F \in \F} |F| \cdot \prob{\mbox{$F$ is bad}}\\
\nonumber
& = &
\sum_{i=2}^{\infty}
\sum_{F \in \F_1 \,:\, |\delta(F)| = 2i} |F| \cdot \prob{\mbox{$F$ is bad}}
+
\sum_{j=2}^{\infty}
\sum_{F \in \F \sm \F_1 \,:\, |\delta(F)| = j} |F| \cdot
\prob{\mbox{$F$ is bad}}\\
\label{eq:main2}
& \le & 
\sum_{i=2}^{\infty}
\sum_{F \in \F_1 \,:\, |\delta(F)| = 2i} \frac{i^2}{4} \cdot (3\sqrt{p})^{2i}
+
\sum_{j=2}^{\infty}
\sum_{F \in \F \sm \F_1 \,:\, |\delta(F)| = j} j^2 \cdot
(3\sqrt{p})^{j}\\
\label{eq:main3}
& \le & 
\sum_{i=2}^{\infty}
N \cdot \frac{2 \cdot 3^{2i-2}}{i}
\frac{i^2}{4} \cdot (3\sqrt{p})^{2i}
+
\sum_{j=2}^{\infty}
2\sqrt{N} \cdot 3^{j-2} \cdot 
j^2 \cdot
(3\sqrt{p})^{j}\\
\nonumber
& = & 
N \sum_{i=2}^{\infty} \frac{i}{16} (81p)^i
+
\sqrt{N} \sum_{j=2}^{\infty} \frac{2j^2}{9} (9\sqrt{p})^j\\
& = & N(cp^2) + O(p\sqrt{N}),\label{eq:T_final}
\end{eqnarray}
for a constant $c > 0$ that is independent of $p$ and $N$.  In the
derivation, \eqref{eq:main1} follows from the definition of $T$
and linearity of expectation, \eqref{eq:main2} follows from
Lemmas~\ref{l:chernoff} and~\ref{l:square}, and~\eqref{eq:main3}
follows from Lemma~\ref{l:count}.  In the final line, we are assuming
that $p < 1/81$.

\begin{remark}\label{rem:prec}
%
There are several ways to optimization the computation above.
The requirement that $p < 1/81$ was needed for the infinite series to
converge. To improve this, we can use the tighter upper bound of
$(2ep)^{i/2}$ for the probability that a region of boundary size $i$
is bad (see Lemma \ref{l:chernoff}). We can then replace the upper bound
on the number of regions of each type in Lemma~\ref{l:count} with
tighter results from statistical physics. In particular, the number of
type-1 sets with boundary size $i$ can be upper bounded by $N\mu^i$
(Eq. 3.2.5 of \cite{madras93}), where $\mu$ is the so-called
connective constant of square lattices and is upper bounded by 2.65
\cite{clisby12}. The number of type 2--5 sets with boundary length $i$
can similarly be upper bounded by $4\sqrt{N}\mu^ie^{\kappa\sqrt{i}}$
for the same value of $\mu$ and for some fixed constant $\kappa>0$
\cite{hammersley62}. Putting these together, we obtain that the
infinite series for all region types is at most a constant when $p <
1/39$. 

To compute an upper bound on the constant~$c$ in 
the term in $\eqref{eq:T_final}$ that is linear in $N$, recall
that this term can be
attributed to the type-1 regions. We expand the sum in
\eqref{eq:main1} over type-1 regions into two terms: one term that
explicitly enumerates over type-1 regions whose corresponding simple
cycle in $G^d$ is of length $i=2$ to $100$, and a remainder term. The
sum in the first term can be computed exactly as follows. For each
value of $i$, the probability that the region is bad is simply
$\sum_{k=i/2}^i {i \choose k} p^k(1-p)^{i-k}$. We can then use the
bound $\sum_{F\in \F_1 : |\delta(F)|=i} |F| \leq N
\sum_{a=1}^{i^2/16}a c_{a,i}$, where $c_{a,i}$ is the number of
distinct cycles in an infinite grid of length $i$ and area $a$ (up to
translation). These cycles also go by the name of {\em self-avoiding
  polygons} in statistical physics, and the numbers $c_{a,i}$ have
been exhaustively computed up to $i=100$ \cite{jensen00}. Finally, the
infinite sum in the remainder can be shown to be upper bounded by
$51^2 b^{51}/(1-b)^3$ for $b=2ep(2.65)^2$. The resulting function can
then be shown to be upper bounded by $8Np^2$ for $p \leq 0.017$. 
\end{remark}



{\bf Analyzing the Second Stage:}
%
Our analysis so far shows that
the better of $\Yhat,-\Yhat$ has small error with respect to the
ground truth $\Y$.  In  
the second phase, we use the node labels to 
choose between them via a ``majority vote.'' 
We next show that,
provided $q$ is slightly 
below $\tfrac{1}{2}$, the better of $\Yhat,-\Yhat$ is chosen in the 
second stage with high probability.
This 
completes the proof of Theorem~\ref{thm:ub}.

Our starting point for the second-stage analysis is
the inequality $\expect {H_0} \leq N \cdot cp^2$, where $H_0$ is the Hamming error of the better of 
$\Yhat,-\Yhat$.
Markov's inequality implies that $\prob {H_0\geq  
  \frac{1}{k p^2} Ncp^2 } \leq 
k p^2$, where $k$ is a free parameter.

For the second stage,
let $B'$ be the set of wrong node observations. Chernoff bounds imply
that, for every constant $\delta > 0$ and sufficiently large $N$,
$\prob{|B'|\geq (1+\delta)Nq} \leq \frac{1}{N^2} $. 
Observe that if the sum of the number of bad node observations and
the number of misclassified nodes for the better of $\Yhat,-\Yhat$ is
less than $N/2$, then the two-stage algorithm~$\ouralg$ would choose
the better of 
$\Yhat,-\Yhat$. Hence, with probability $1- k p^2 -\frac{1}{N^2}$, the
algorithm would choose the better of $\Yhat,-\Yhat$  provided $ \frac{1}{k p^2}
Ncp^2 + (1+\delta) Nq <\frac{N}{2}$, or equivalently, 
$$ \frac{c}{k} +
(1+\delta) q <\frac{1}{2}.$$ 
This inequality is satisfied for small $\delta$ provided
$k>\frac{c}{1/2 - (1+\delta)q}$.
Thus,  
$$ \expect{H} \leq 1 \cdot N cp^2 + (k p^2 + \frac{1}{N^2}) \cdot N \leq N \cdot ( (c+1) p^2 + k p^2 ) \leq N \cdot C p^2 $$
for $ N > N_0(p,q)$, where $H$ is the error of the 2-step algorithm. 
(In second inequality we use that $N > \frac{1}{p}$.)

\subsection{Lower Bound \label{sec:lb}}

%
In this section, we prove that every algorithm
suffers worst-case (over the ground truth) expected error
$\Omega(p^2N)$ on 2D grid graphs, matching the upper bound for the
2-step algorithm~$\ouralg$ that we proved in Theorem~\ref{thm:ub}. We use the
fact that marginal inference is 
minimax optimal for Eq.~\ref{eq:definition_of_error} 
(see Appendix~\ref{ss:MI_minimax}).
The expected error of marginal inference is
independent of the ground truth (by symmetry), so we can lower bound
its expected error for the all-0 ground truth.
Also, its error only decreases if it is given part of the ground truth.

Let $G=(V,E)$ denote an $n \times n$ grid with $N = n^2$ vertices.
Let $\Y:V \rightarrow \{-1,+1\}$ denote the ground truth.  
We consider the case where $\Y$ is chosen at random from the following
distribution. Color the nodes of $G$ with black and white like a chess
board.  White nodes are assigned binary values uniformly and
independently. Black nodes are assigned the label $+1$. 
%
%
\comment{
Given $\Y$, we generate the input at follows.  
For each
vertex $v \in V$, label it $-\Y(v)$ (a {\em bad node}) with probability $q 
\tfrac{1}{2}$ and $\Y(v)$ (a {\em good node}) with probability
$1-q$.
We omit dependence on the parameter $q$ in what follows; it just needs
to be bounded away from~0.
For each edge $(u,v) \in E$, we flip a coin with probability $p <
\tfrac{1}{2}$.
If heads, the edge is {\em bad}; this means we label it ``$+1$'' if $\Y_u
\neq \Y_v$ and ``$-1$'' if $\Y_u = \Y_v$.  Otherwise, the edge is {\em good}
and we label it ``$+1$'' if $\Y_u = \Y_v$ and ``$-1$'' if $\Y_u \neq
\Y_v$.
}
Given $\Y$, input is generated using the random process described in Section~\ref{ss:ar}.

Consider an arbitrary function from inputs to labellings of $V$.  We
claim that the expected error of the output of this function,
where the expectation is over the choice of ground truth $\Y$ and the
subsequent random input, is $\Omega(p^2N)$.
This implies that, for every function, there
exists a choice of ground truth $\Y$ such that the expected error
of the function (over the random input) is $\Omega(p^2N)$.

Given $\Y$, call a white node {\em ambiguous} if exactly two of the
edges incident to it are labeled ``$+1$'' in the input. 
A white node is ambiguous with probability $6p^2(1-p)^2\ge 5.1p^2$ for
$p\le 0.078$.
Since there are $N/2$ white nodes, and the events corresponding to
ambiguous white nodes are independent, Chernoff bounds imply that 
there are at least $\tfrac{5p^2}{2}N$ ambiguous white nodes with very
high probability. 

Let $L$ denote the error contributed by ambiguous white nodes.  
Since the true labels of different white nodes are                             
conditionally independent (given that all black nodes are known                
to have value $+1$), the function that minimizes $\expect{L}$ just                
predicts each white node separately.
The algorithm that minimizes the expected value of $L$ 
simply predicts that each ambiguous white node has true label equal to
its input label.  This prediction is wrong with constant probability,
so $\expect{L} = \Omega(p^2N)$ for every algorithm.  Since $L$ is a
lower bound on the Hamming error, the result follows.

\section{Extensions \label{s:ext}}

The section sketches several extensions of our model and results, to
planar graphs beyond grids (Section~\ref{ss:gridlike}), 
to expander graphs (Section~\ref{ss:expanders}),
to graphs with a large minimum cut (Section~\ref{ss:bigcut}), and to
semi-random models (Section~\ref{ss:semi}).

\subsection{Approximate Recovery in Other Planar Graphs}\label{ss:gridlike}

Section~\ref{s:grid} 
gives a polynomial-time
algorithm for essentially information-theoretically optimal approximate
recovery in grid graphs.  While the analysis 
does use properties of
grids beyond planarity, it is robust in that it applies to all planar
graphs that share two key features with grids.

The path graph (see Section~\ref{ss:ar}) shows that approximate recovery
is not possible for all planar graphs; additional conditions are
needed.  
The first property, which fails in ``thin'' planar graphs
like a path but holds in many planar graphs of interest, is the
following weak expansion property:
\begin{itemize}

\item [(P1)] 
{\em (Weak expansion.)}  For some constants $c_1,c_2 > 0$,
  every filled-in set $F \in \F$ satisfies $|F| \le c_1|\delta(F)|^{c_2}$.

\end{itemize}
(Filled-in sets can be defined analogously to the grid case.)

The second key property is that 
the number of filled-in sets with a
given boundary size $i$ should be at most exponential in $i$.
As in Lemma~\ref{l:count}, a
sufficient (but not necessary) condition for this property is that 
the
dual graph has bounded degree (except possibly for the vertex
corresponding to the outer face, which can have arbitrary degree).
\begin{itemize}

\item [(P2)] 
{\em (Bounded Dual Degree.)}  Every face of $G$, except
  possibly for the outer face, contains at most a constant~$c_3$
  number of edges.  
%
\end{itemize}

Our proof of computationally efficient
approximate recovery (Theorem~\ref{thm:ub})
extends to show that approximate
recovery is possible in every planar graph that satisfies properties
(P1) and (P2); the precise bound on the function $f(p)$ depends on the
constants $c_1,c_2,c_3$.
%

\subsection{Approximate Recovery in Expander Graphs}
\label{ss:expanders}

%
%
Structured prediction on expander graphs is often applied to relational classification (e.g., predicting protein-protein  
interactions or web-page classification).  
This section proves 
that every family $\G$ of $d$-regular expanders
admits approximate recovery.
Recall the definition of such a family:
for some constant $c > 0$, 
for every $G \in \G$ with $N$ vertices and every set
$S \sse V$ with $|S| \le N/2$, $|\delta(S)| \ge c \cdot d
\cdot |S|$, where the boundary $\delta(S)$ is the set of edges with
exactly one endpoint in $S$.  
We claim that $\G$ allows approximate recovery with $f(p) = 3p/c$, and
proceed to the proof.


The algorithm is the same as in Section~\ref{s:grid}; it
is not computationally efficient for expanders. 
As in Section~\ref{s:grid}, analyzing the two-stage algorithm reduces
to analyzing the better of the two solutions produced by the first stage.
We therefore assume that the output $\hat{Y}$ of the first stage has
error $H$ at most $N/2$.

Fix a noise parameter $p \in (0,\tfrac{1}{2})$, a
graph $G \in \G$ with $N$ sufficiently large,
and a ground truth.
Let $B$
denote the set of bad edges.
%
Chernoff bounds imply that for all sufficiently large $N$,
the probability that $|B| \ge 2p|E| = pdN$ is at most $1/N^2$.
When $|B| > pdN$, we can trivially bound the error $H$ by $N/2$.
When $|B| \le pdN$, we bound $H$ from above as follows.

\comment{


The claim is that at
least half the edges in each boundary $\delta(C_i)$ are bad.  Why?
Since all nodes of $C_i$ are mislabeled and all of the neighbors of
$C_i$ are correctly labelled, all edges of $\delta(S)$ are inconsistent
with respect to the ground truth.  On the other hand, at least half
the edges of $\delta(S)$ are consistent with the input labels ---
otherwise, flipping the labels of all nodes in $C_i$ would yield a
better correlation clustering of the input, which contradicts the optimality of $\Yhat$. These two facts imply
that at least half of the edges in $\delta(C_i)$ are bad.

}
Let $S$ denote the nodes of $V$ correctly
classified by the first stage $\Yhat$ and $C_1,\ldots,C_k$ the
connected components of 
the (misclassified)
nodes of the induced subgraph $G[V \sm S]$.  
Since $H/2$, $|C_i| \le N/2$ for every $i$.
We have
$$
H = \sum_{i=1}^k |C_i| 
 \le  \frac{1}{cd} \cdot \sum_{i=1}^k |\delta(C_i)|\\
 \le  \frac{2}{cd} \cdot \sum_{i=1}^k |\delta(C_i) \cap B|\\
 \le  \frac{2}{cd} \cdot |B|,
$$
where the first inequality follows from the expansion condition, the
second from the Flipping Lemma (Lemma~\ref{l:flipping}),
and the third from the fact that the $\delta(C_i)$'s are disjoint
(since the $C_i$'s are maximal).  Thus, when $|B| \le pdN$, $H \le
\tfrac{2p}{c} N$. Overall, we have 
$$
\expect{H} \le 1 \cdot \frac{2p}{c}N + \frac{1}{N^2} \cdot \frac{N}{2}
\le \frac{3p}{c}N
$$
for $N$ sufficiently large, as claimed.

\comment{

\subsection{Noisy Node Labels for Expanders}\label{ss:expandernode}

We can ask the same question we have discussed in~\ref{ss:node} for $d$-regular expanders, i.e. what is the recovery rate for expanders if we have both node and edge observations, where $p$ and $q$ are corresponding noise parameters. One approach is to use the two phase procedure as suggested in Section~\ref{ss:node}. However, here we would like to analyze another algorithm which directly computes an assignment to the nodes. As in~\ref{ss:expanders}, the algorithm would be constructive but not efficient. Let us consider the setting where each node is $\pm 1$. Our algorithm would try to maximize the score function
\begin{equation}\label{eq:cMap}
\gamma \sum_{i\in V}Y_i\X_i + \sum_{ij\in E} Y_iY_j\X_{ij}
\end{equation}
over $\{ Y_i:i\in V \}$ where for $i \in V$, $\X_i \in \{ +1,-1 \}$ are the node observations and for ${ij}\in E$, $\X_{ij} \in \{ +1,-1 \}$ are the edge observations and $\gamma \geq 0$ is a free parameter for us to choose. Due to its virtues we will see later on, we choose

\begin{equation}\label{eq:gamma}
  \gamma = \log \left( \frac{1-q}{q} \right) / \log \left( \frac{1-p}{p} \right)
\end{equation}
for $p,q<1/2$.

\begin{theorem}
  Let $q\leq \frac{1}{8}$ and $p\leq p_0$ where $p_0$ is a small constant. Then, for all sufficiently large $N>N_0(p,q)$,
  $$\expect H \leq N \cdot f(p,q)$$ where $lim_{p,q\downarrow 0} f(p,q)=lim_{p\downarrow 0} f(p,q)=lim_{q\downarrow 0}f(p,q)=0$.
\end{theorem}

\begin{proof}
First, by Chernoff bounds, for sufficiently large $N$ with probability $1-\frac{1}{N^2}$ the number of bad edges, $|B|$, is at most $2pm=pdN$, and the number of bad nodes, $|B'|$, is at most $2qN$.

Let $V'$ be the correctly classified nodes by the algorithm and $C_1,\cdots,C_k$ be the connected components of the misclassified nodes, $G[V\setminus V']$. We call each of $C_i$ as a bad region. Let $S$ be a bad region. Let say there are $a$ good edge observations, $b$ bad edge observations, $f$ good node observations, $e$ bad node observations on $\delta (S)$ and $S$. If we flip the assignment on $S$ the change in the score function would be $(a-b)+\gamma (f-e)$ which is at most $0$ since assignment maximizes the score function. Hence $\gamma (e-f)\geq (a-b)$. Note that $a+b=|\delta (S)|$, $e+f=|S|$, if we  plug them in to equation we get $\gamma (2e-|S|)\geq |\delta (S)|-2b$. Playing with this equation, we get $2b+\gamma 2e \geq |\delta (S)|+\gamma |S| \geq cd|S|+\gamma |S| = (cd+\gamma) |S|$ where we have used the expander condition and assumed $|S|\leq \frac{N}{2}$. Hence any bad region satisfies
\begin{equation}
\frac{2}{cd+\gamma} (b+\gamma e)\geq |S|
\end{equation}
Noting that none of the two bad regions cannot share a bad edge or node observations, with probability $1-\frac{1}{N^2}$
\begin{equation}
\sum_{i=1}^{k} |C_i| \leq \frac{2}{cd+\gamma}(|B|+\gamma |B'|) \leq \frac{4}{cd+\gamma} (pd+\gamma q)N
\end{equation}
In first inequality, we have assumed $|S|\leq \frac{N}{2}$ for all bad regions $S$ and the following lemma will prove this property.

\begin{lemma}\label{l:badbig}
There is no bad region $S$ with $|S|>\frac{N}{2}$ if $q \leq \frac{1}{8}$ and $p\leq p_0$ where $p_0$ is a small constant.
\end{lemma}


\begin{proof}
Assume for a contradiction, $S$ be a bad region with size $|S| = N\left( \frac{1}{2}+k \right) $ where $0<k \leq \frac{1}{2}$. We have proved that, regardless of its size, any bad region satisfies
\begin{equation}\label{eq:badregion}
2b+\gamma 2e\geq |\delta (S)|+|S|\gamma
\end{equation}

On one hand, $2e \leq 4 qN \leq \frac{N}{2}$ and $|S| = N\left( \frac{1}{2}+k
\right)$. If we plug them into Eq.~\ref{eq:badregion}, we get $2b\geq |\delta(S)|+Nk\gamma$. Now using $b+|\delta(S)|\geq 2b\geq |\delta(S)|+Nk\gamma$ and $pdN\geq b$ we get $b\geq N\gamma k$ and conclude $\frac{pd}{\gamma}\geq k$.

On the other hand, using $|\delta(S)|=|\delta(G\setminus S)|\geq cdN(\frac{1}{2}-k)$ and $ b\leq pdN $ we get $2pd\geq \frac{cd}{2}+k(\gamma-cd)$. There are two cases: If $\gamma\geq cd$, right hand side is minimized when $k=0$ implying $p\geq \frac{c}{4}$. By taking $p_0<\frac{c}{4}$ we reach a contradiction. If $\gamma\leq cd$, right hand side minimized when $k=\frac{pd}{\gamma}$. If we plug $k$ into Eq.~\ref{eq:badregion} we get $p+\frac{pdc}{\gamma} \geq \frac{c}{2}$. As $\gamma \geq \log 7 / \log \left( \frac{1-p}{p} \right) $, it would lead to $p+\frac{pdc}{\log 7} \log \left( \frac{1-p}{p} \right)\geq \frac{c}{2}$. However, note that as $p$ goes to 0 left hand side also goes to 0, so for sufficiently small $p\leq p_0$ this inequality cannot be satisfied. This contradicts with the assumption of $S$ being bad. 
\end{proof}



Thus,
$$ \expect{H} \leq 1 \cdot \frac{4}{cd+\gamma} (pd+\gamma q) + \frac{1}{N^2} \cdot N
\leq N \cdot \frac{5}{cd+\gamma} (pd+\gamma q) = N \cdot f(p,q)
$$
for sufficiently large $N$, as claimed. It is easy to see that $f(p,q)$ satisfies $lim_{p\downarrow 0} f(p,q)=lim_{q\downarrow 0}f(p,q)=lim_{p,q\downarrow 0} f(p,q)=0$
\end{proof}

}




\subsection{Graphs with a Large Min Cut}\label{ss:bigcut}

Approximate recovery is also possible in every graph family
$\G$ for which the global minimum cut $c^*$ is bounded below
by $c \log N$ for a sufficiently large constant~$c$.
This class of graphs is incomparable to the
expanders considered in Section~\ref{ss:expanders}.

To see why a large minimum cut is sufficient, we modify the 
first-stage analysis in the proof of
Theorem~\ref{thm:ub} as follows.  Define $\C$ as the subsets $S$ of $V$
such that $|S| \le 
N/2$ and $G[S]$ is connected, and $\C_i$ the subset of $\C$
corresponding to sets $S$ with $|\delta(S)|=i$.  Recall that, for
every $\alpha \ge 1$, the number of $\alpha$-approximate minimum cuts
of an undirected graph is at most $N^{2\alpha}$ (e.g., see \cite{karger1993global}).  Thus, $|\C_i| \le N^{2i/c^*}$, which is
at most $2^{2i/c}$ when $c^* \ge c \log_2 N$.  That is, there
can only be an exponential number of connected subgraphs with a given
boundary size (cf., property (P1) in Section~\ref{ss:gridlike}).  A
calculation along the lines of the proof of Theorem~\ref{thm:ub}, 
then implies that approximate recovery is
possible, provided the constant $c$ is sufficiently large.

\subsection{Semi-Random Models}\label{ss:semi}

All of our positive results make minimal use of the properties of the
random process that generates inputs given the ground truth.
Our proofs only need the fact that the probability that a boundary
$\delta(S)$ consists of at least half bad edges decays exponentially
in the boundary size $|\delta(S)|$ (Lemma~\ref{l:chernoff}).  As such,
our positive results are robust to many variations in the random
model.

For example, the fact that every edge has the same noise parameter $p$
is not important --- our algorithms continue to have the exact same
guarantees, with the same proofs, with the function $f(p)$ replaced by
$f(p_{\max})$, where $p_{\max}$ is the maximum noise parameter of any
  edge.  If bad edges are negatively correlated instead of
  independent, then the relevant  Chernoff bounds (and hence
  Lemma~\ref{l:chernoff}) continue to hold (see
  e.g. \cite{panconesibook}), and our results remain unchanged.

Most interestingly, our positive results can accommodate the following
semi-random adversary (cf., \cite{feigekillian}).  Given a graph $G$
and ground truth, as before nature independently designates each
edge as good or bad with probability $1-p$ and $p$, and
similarly for nodes (with probability $1-\q$ and $\q$).  Good nodes
and edges are labelled according to the ground truth.  An adversary,
who knows what algorithm will be used on the input, selects arbitrary
labels for the bad nodes and edges.  Our basic models corresponds to
the special 
case in which the adversary labels every bad node and edge to be
inconsistent with the input.  Such semi-random adversaries can often
foil algorithms that work well in a purely random model, especially
algorithms that are overly reliant on the details of the input
distribution or that are ``local'' in nature.

In all of our proofs of our positive results, we effectively assume
that every relevant set $S$ that has a boundary $\delta(S)$ with at
least half bad edges contributes $|S|$ to our algorithm's error.  
Thus, an adversary maximizes our error upper bound by maximizing the
number of bad nodes and edges.
In other words, from the standpoint of our error
bounds, a semi-random adversary is no worse than a random one.

\section{Empirical Study}
\label{sec:empirical}
Our theoretical analysis suggests that statistical recovery on 2D grid graphs can attain an error that scales with $p^2$. Furthermore, we show that this error is achieved using the two-step algorithm in \secref{s:grid}.
Here we describe a synthetic experiment that compares the two-step algorithm to other recovery procedures. We consider a $20\times 20$ grid, with high node noise of $0.4$ and variable edge noise levels. In addition to the two-step algorithm we consider the following:\footnote{We also experimented with a greedy hill climbing procedure, but results were poor and are not shown.} 
\begin{list}{\labelitemi}{\leftmargin=1em}
\item Marginal inference - predicting according to $p(\Y_i|\X)$. As mentioned in \secref{ss:ar} this is the optimal recovery procedure. Although it is generally hard to calculate, for the graph size we use it can be done in $20$ minutes per model.
\item Local LP relaxation - Instead of calculating $p(\Y_i|\X)$ one can resort to approximation. One possibility is to calculate the mode of $p(\Y|\X)$ (also known as the MAP problem). However, since this is also hard, we consider LP relaxations of the MAP problem. The simplest such relaxation assumes locally consistent pseudo-marginals.
\item Cycle LP relaxation - A tighter version of the LPs above uses cycle constraints instead of pairwise. In fact, for planar graphs with no external field (as in the first step of our two step algorithm) this relaxation is tight. It is thus of interest to study it in our context. For both the cycle and local relaxations we use the code for \cite{SontagChoeLi_uai12}.
\end{list}

Fig.~\ref{fig:2020} shows the expected error for the different algorithms, as a function of edge noise. It can be seen that the two step procedure almost matches the accuracy of the optimal marginal algorithm for low noise levels. As the noise increases the gap grows. Another interesting observation is that the local relaxation performs significantly worse than the other baselines, but that the cycle relaxation is close to optimal. The latter observation is likely to be due to the fact that with high node noise and low edge noise, the MAP problem is ``close'' to the no node-noise case, where the cycle relaxation is exact. However, an analysis of the Hamming error in this case remains an open problem.

\begin{figure}[!h]
\label{fig:2020}
\centering
\vspace{-1mm}
\includegraphics[width=0.4\textwidth]{./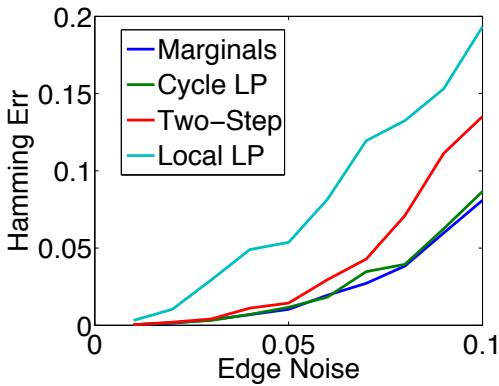}\vspace{-4mm}
\caption{Average Hamming error for different recovery algorithms. Data is generated from a $20\times20$ grid with node noise $q=0.4$ and variable edge noise $p$. The true $\Y$ is the all zeros word. Results are averaged over $100$ repetitions.}
\end{figure}

\ignore{
should result in Hamming error on the order of $q^2$ for
Show empirically that in the regime of large node noise ($q$) , the 2-step algorithm gets close to information
theoretically optimal (marginal inference). Show in regime of large q, 2 step algorithm gets close to information
theoretically optimal. ICM fails. Also show results for c-MAP, to point out that it is quite close to marginal inference.
}

\section{Discussion}
Structured prediction underlies many empirically successful systems in
machine vision and NLP.  In most of these (e.g.,
\cite{koo2010dual,Kappes13}) the inference problems are intractable
and approximate inference is used instead. However, there is little
theoretical understanding of when structured prediction is expected to
perform well, how its performance is related to the structure of the
score function, which approximation algorithms are expected to work in
which setting, etc. 

In this work we present a first step in this direction, by analyzing
the error of structured prediction for 2D grid models. One key finding
is that a two-step algorithm attains the information theoretically
optimal error in a certain regime of parameters. What makes this
setting particularly interesting from a theoretical perspective is
that exact inference (marginals and MAP) is intractable due to the
intractability of planar models with external fields. Thus, it is
rather surprising that a tractable algorithm achieves optimal
performance.



Our work opens the door to a number of new directions, with both
theoretical and practical implications. In the context of grid models,
we have not studied the effect of the node noise $q$ but rather
assumed it may arbitrary (less than $0.5$). Our two step procedure uses
both node and edge evidence, but it is clear that for small $q$,
improved procedures are available. In particular, the experiments in
\secref{sec:empirical} show that decoding with cycle LP relaxations
results in empirical performance that is close to optimal. More
generally, we would like to understand the statistical and
computational properties of structured prediction for complex tasks
such as dependency parsing \cite{koo2010dual} and non-binary variables
(as in semantic segmentation). In these cases, it would be interesting
to understand how the structure of the score function affects both the
optimal expected accuracy and the algorithms that achieve it.

\ignore{
Re-iterate the high-level questions: {\em information theoretic}: is
approximate recovery possible? {\em computational}: is efficient approximate
recovery possible? These high-level questions are non-trivial and
relevant for many different problems. 
}

\ignore{
Our work opens the door to a number of new directions, with both
theoretical and practical implications.
}
\ignore{
Our proofs only need the fact that the probability that a boundary 
$\delta(S)$ consists of at least half bad edges decays exponentially 
in the boundary size $|\delta(S)|$ (Lemma~\ref{l:chernoff}).  As such,
our positive results are robust to many variations in the random 
model. 
For example, the fact that every edge has the same noise parameter $p$
is not important --- our algorithms continue to have the exact same 
guarantees, with the same proofs, with the function $f(p)$ replaced by 
$f(p_{\max})$, where $p_{\max}$ is the maximum noise parameter of any 
  edge.  If bad edges are negatively correlated instead of 
  independent, then the relevant  Chernoff bounds (and hence 
  Lemma~\ref{l:chernoff} continue to hold, and our results remain unchanged.
  }


\appendix

\section{Marginal Inference is the Minimax Optimal Algorithm}
\label{ss:MI_minimax}

In this section, we prove marginal inference using the uniform
prior, which we denote by ${\algo}_1$, is the minimax optimal
algorithm (i.e., minimizes $e({\algo}) = \max_\y e_\y(\algo)$).
The marginal inference algorithm predicts each node separately by $\Yhat_i
\leftarrow \arg\max_{\Y_i} p(\Y_i \mid X)$ using the uniform prior
over $X$.

Assume for contradiction that there is an algorithm ${\algo}_0$ that
yields strictly smaller error than marginal inference. Hence, by
definition the of a minimax optimal algorithm, 
there exists ground truth assignments $\y_0$ and $\y_1$ such that
$\max_\y 
e_\y(\algo _0) = e_{\y_0}({\algo}_0) < e_{\y_1}({\algo}_1)$. 
By symmetry, the marginal inference algorithm has equal error for
every ground truth.
Hence, $e_\y({\algo}_0) < e_\y({\algo}_1)$ for every
ground truth assignments $\y$.

On the other hand, marginal inference minimizes the expected 
Hamming error when the prior distribution over ground truth assignments
is uniform.
To see why, let $\Yhat _i
(\X)$ be an estimator of the $i$-th node. The expected Hamming error
of $\Yhat _i$ assuming uniform prior on $\Y_i$ is
\be
  \prob{\Yhat_i(\X) \ne \Y _i} = \sum _\X 1 _{\Yhat_i(\X)=1} \cdot
  \frac{1}{2} \prob{\mu _i ^{-}(\X)} + \sum _\X 1 _{\Yhat_i(\X)=-1}
  \cdot \frac{1}{2} \prob{\mu _i ^{+}(\X)} 
\ee 
where $\mu _i ^{+}$ and $\mu _i ^{-}$ are the distributions of $\X$
conditioned on $\Y_i=+1$ and $\Y_i=-1$, respectively. 
Since $\expect [\y] {e_\y({\algo})}$ is the sum of the
expected error at individual nodes, marginal inference using
the uniform prior minimizes it.
The optimality of marginal inference for the uniform prior
contradicts the fact that ${\algo}_0$
performs better than marginal inference on all ground truth
assignments. 

Notice that this proof also works for the subset of ground truths
considered in the proof of lower bound for the grids
(Section~\ref{sec:lb}).

\section{Illustration of Filled In Sets}
\label{sec:illustration_filled_in_sets}

Recall that for every subset $S$ we defined a corresponding filled in
set $F(S)$. Figures~\ref{fig:type1}--\ref{fig:type5} illustrate the
transformation from a subset $S$ to the corresponding filled-in set
$F(S)$.  

\begin{figure}[!h]
\label{fig:type1}
\centering
\includegraphics[width=\textwidth]{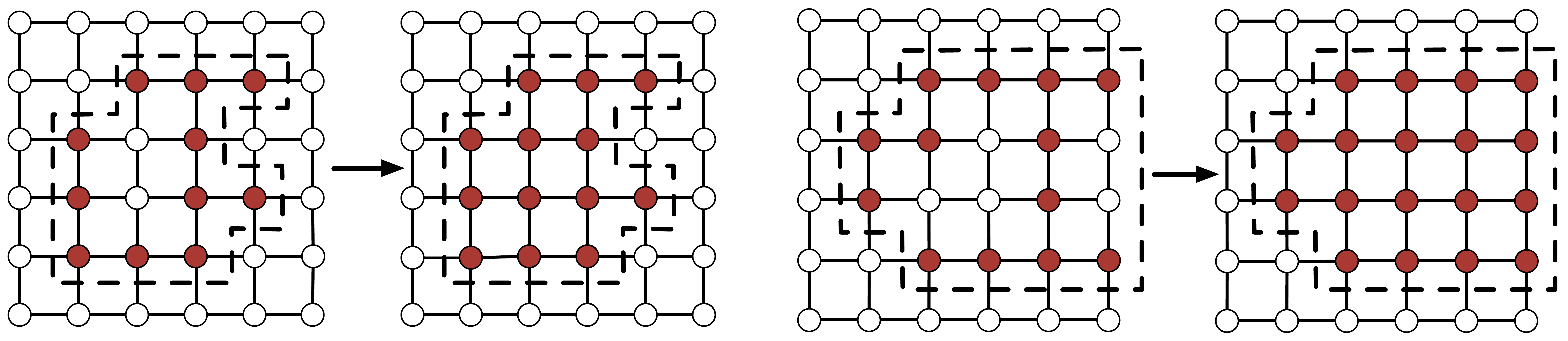}
\caption{An example of a type 1 set (left) and a type 2 set (right)
and the corresponding filled-in sets.}
\end{figure}

\begin{figure}[!h]
\label{fig:type2}
\centering
\includegraphics[width=\textwidth]{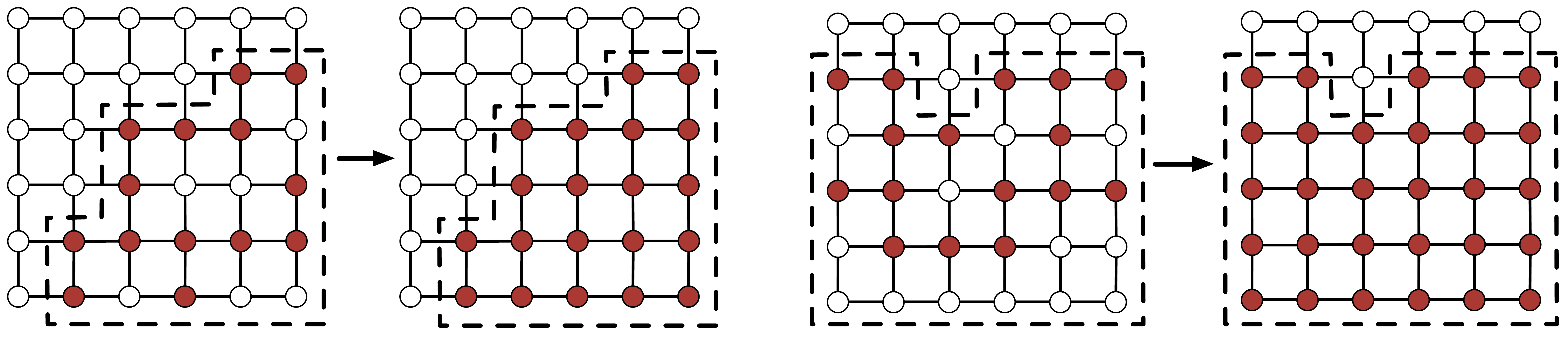}
\caption{An example of a type 3 set (left) and a type 4 set (right)
and the corresponding filled-in sets.}
\end{figure}

\begin{figure}[!h]
\label{fig:type5}
\centering
\includegraphics[width=105mm]{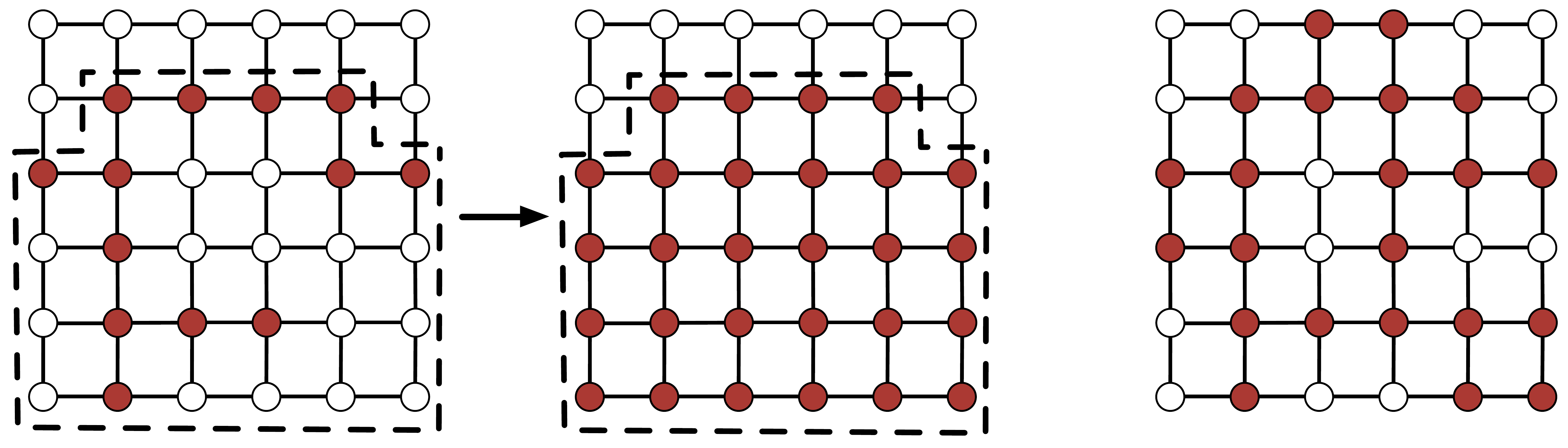}
\caption{An example of a type 5 set and the corresponding filled-in
  set (left) and an example of type 6 set.} 
\end{figure}

\end{document}